\documentclass[runningheads]{llncs}

\usepackage{eccv}

\usepackage{eccvabbrv}

\usepackage{graphicx}
\usepackage{booktabs}

\usepackage[accsupp]{axessibility}

\usepackage{hyperref}

\usepackage{orcidlink}

\newcommand{\OurName}{\textit{DCARL}}

\begin{document}

\title{DCARL: A Divide-and-Conquer Framework for Autoregressive Long-Trajectory Video Generation}

\titlerunning{DCARL}

\author{Junyi Ouyang\inst{1,2} \and
Wenbin Teng\inst{1,2} \and
Gonglin Chen\inst{1,2} \and
Yajie Zhao\inst{1,2} \and
Haiwei Chen\inst{1,2}}

\authorrunning{J.~Ouyang et al.}

\institute{Institute for Creative Technologies \and
University of Southern California \\
\email{\{junyiouy, wenbinte, gonglinc\}@usc.edu, zhao@ict.usc.edu, chw9308@hotmail.com}}

\maketitle

\begin{abstract}
Long-trajectory video generation is a crucial yet challenging task for world modeling, primarily due to the limited scalability of existing video diffusion models (VDMs). Autoregressive models, while offering infinite rollout, suffer from visual drift and poor controllability. To address these issues, we propose \OurName{}, a novel divide-and-conquer, autoregressive framework that effectively combines the structural stability of the divide-and-conquer scheme with the high-fidelity generation of VDMs. Our approach first employs a dedicated Keyframe Generator trained without temporal compression to establish long-range, globally consistent structural anchors. Subsequently, an Interpolation Generator synthesizes the dense frames in an autoregressive manner with overlapping segments, utilizing the keyframes for global context and a single clean preceding frame for local coherence. Trained on a large-scale internet long trajectory video dataset, our method achieves superior performance in both visual quality (lower FID and FVD) and camera adherence (lower ATE and ARE) compared to state-of-the-art autoregressive and divide-and-conquer baselines, demonstrating stable and high-fidelity generation for long trajectory videos up to 32 seconds in length. Project page: \url{https://junyiouy.github.io/projects/dcarl/}
\keywords{Long-trajectory video generation, Video diffusion models, Controllable video synthesis}
\end{abstract}    
\section{Introduction}
\label{sec:intro}

\begin{figure}[t]
    \centering
    \captionsetup{type=figure}
    \includegraphics[width=0.95\textwidth]{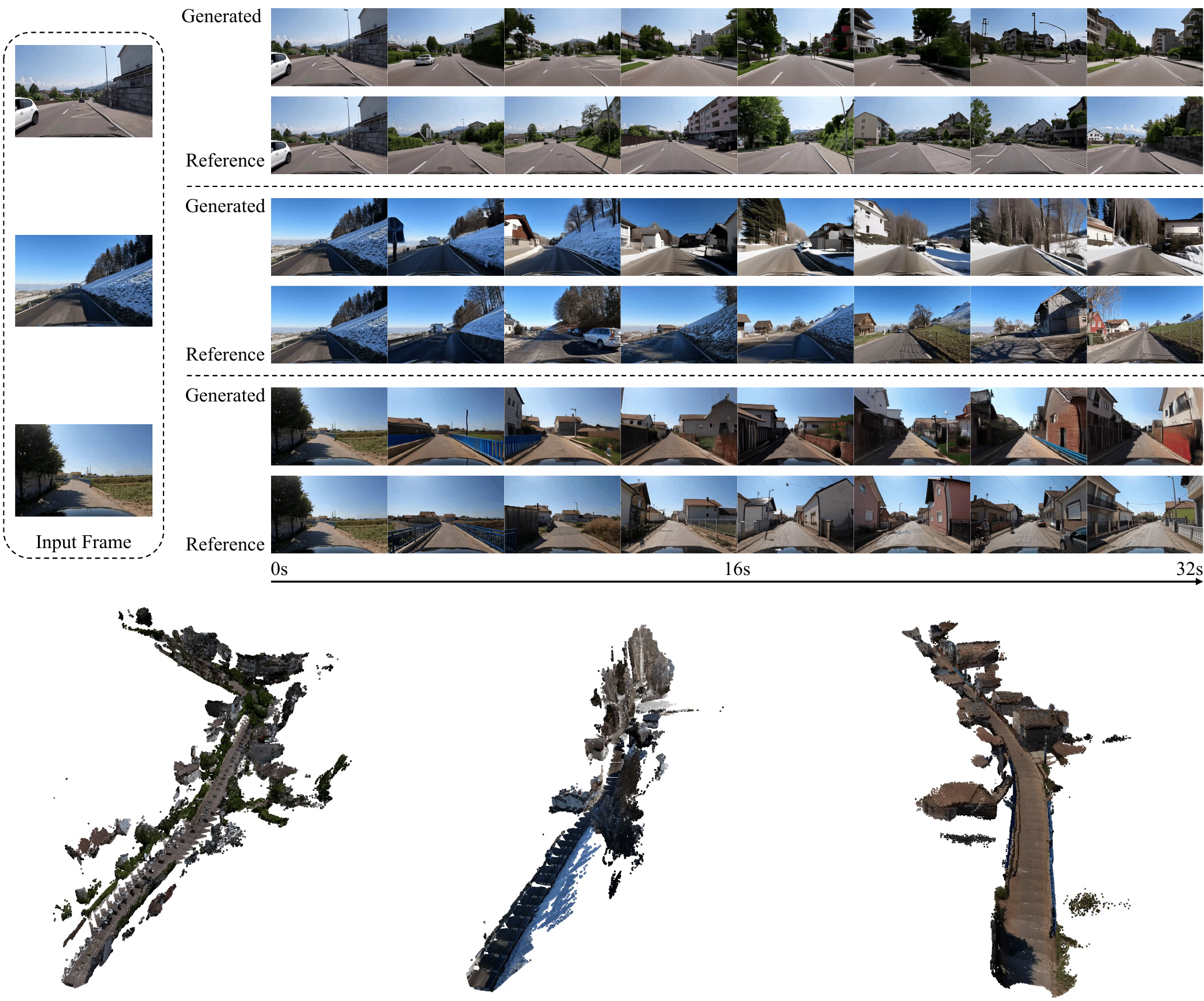}
    \captionof{figure}{
Given a single \textit{Input Frame} (left), a camera trajectory and a text description(omitted for brevity), \OurName{} generates \textbf{high-fidelity} and \textbf{temporally consistent} future scene sequences spanning $32\text{s}$.
The figure showcases synthesis results across three diverse environments: urban (top), snowy suburban (middle), and rural (bottom). The bottom panel shows the corresponding 3D point cloud reconstructions by $\pi_3$\cite{pi3} of the generated sequences. The \textit{Generated} sequences exhibit strong visual realism and long-term temporal coherence that closely match the quality of \textit{Reference} sequences, highlighting our method's efficacy for long-trajectory generation. }
    \label{fig:teaser}
\end{figure}

Long trajectory video generation has emerged as a pivotal research direction in recent years~\cite{gao2023magicdrive, epona, mila, mdv2, vista, streetcrafter, russell2025gaia}. Developing generative models that can simulate large-scale, realistic scenes is of paramount importance for terrain reconstruction, autonomous driving systems and world models. To achieve this, existing methods~\cite{cameractrl,he2025cameractrl2,recammaster,yu2025context} typically leverage video diffusion models (VDMs) to facilitate high-fidelity generation. However, while these models perform well on room-sized scenes, they scale poorly to the synthesis of extended sequences (e.g., 30-second videos). This is primarily because predicting the entire sequence collectively incurs a prohibitive computational cost, limiting their effectiveness for long-duration and large-scene tasks.

Autoregressive (AR) diffusion models~\cite{df1,pyramidal_flow,selfforcing} introduce causal attention mechanisms to VDMs, enabling long-sequence rollout through sliding-window inference and sophisticated noise scheduling. However, these models often suffer from severe drift due to exposure bias—a phenomenon arising from the discrepancy between training on ground-truth contexts and inferring from self-generated frames~\cite{framepack, mila}. These compounding prediction errors not only degrade visual quality but also severely compromise the effectiveness of external control signals like camera poses. As the synthesized content deviates from the valid manifold, the model increasingly loses its ability to interpret and adhere to the geometric constraints provided by the control trajectory. This decoupling creates a fundamental misalignment between the intended camera instructions and the actual synthesized motion, ultimately causing purely AR methods to fail in maintaining trajectory consistency over long trajectories.

Existing methods seek to alleviate exposure bias by mitigating the training--inference mismatch. Diffusion Forcing~\cite{df1,df2} injects random Gaussian noise into conditional inputs, while Self Forcing~\cite{selfforcing} trains causal models on self-generated rollouts to narrow the distribution gap. Although these strategies mitigate local drifting, they fail to explicitly bound error propagation, leaving the challenge of stable frame quality and precise camera control on long trajectories largely unaddressed. Furthermore, techniques~\cite{deep_forcing, rollingforcing} that incorporate attention sinks~\cite{streamingllm} can further impede controllability; fixed sink frames often introduce sink collapse~\cite{lol} and severely hinder the model's responsiveness to dynamic pose instructions, ultimately degrading camera motion fidelity and scene transitions across extended trajectories.

Therefore, in this work, we propose \OurName{}, a novel framework that utilizes VDM in an autoregressive norm. 
Our method follows a divide-and-conquer scheme, employing dual generators specifically tailored for the synthesis of global keyframes and local dense frames. Our key insight is that purely autoregressive generation inherently suffers from unbounded error accumulation. To overcome this, our framework first employs a full-sequence generator to jointly synthesize sparse keyframes across entire video timeline. By modeling these global keyframes jointly rather than sequentially, we effectively bypass autoregressive error propagation, establishing structurally accurate and semantic-consistent anchors for the entire long sequence.

Building upon these global anchors, our dense frame generator synthesizes the intermediate video segments via a constrained interpolation process. Specifically, the local generation is conditioned on both the surrounding keyframes, which serve as fixed constraints, and the preceding dense segment. We identify two failure modes in keyframe-interpolation-based video generation. First, the model can become overly anchored to the provided keyframes, resulting in keyframe duplication in their temporal neighborhood rather than synthesizing smooth, physically plausible motion, which yields unnatural dynamics. The second is boundary inconsistency, where the video exhibits abrupt transitions at segment boundaries. To address these issues, we propose \emph{Motion-Inductive Noisy Conditioning} and a \emph{Seamless Boundary Consistency} module, enabling more dynamic and natural long-video generation. As validated by our empirical results, our  modeling of global keyframe anchors and local transitions achieves highly stable generation of high-fidelity, long trajectory videos with precise camera control. Notably, our approach successfully mitigates the severe visual drifts lie in existing autoregressive approaches, enabling controllable long trajectory video generation.
\section{Related Works}
\label{sec:formatting}

\noindent\textbf{Autoregressive Video Generation.}
Autoregressive video generation models synthesize a long sequence in a causal manner by predicting the next frame (or block) conditioned on past generated content. Therefore, they naturally support streaming output and long-horizon rollout, but they also amplify exposure bias, error accumulation, and long-context constraints. Recent works~\cite{df1,df2,selfforcing,selfforcing++,rollingforcing,causalforcing} focuses on training-time forcing or rollout alignment to reduce the train–test mismatch. They replace purely teacher-forced learning with varying degrees of self-conditioned rollouts, aiming to make the training distribution closer to inference and thereby suppress drift over long trajectories. A second line of works~\cite{deep_forcing,longlive,infinity_rope} targets inference-time long-context stabilization and efficiency without, or with minimal, retraining. Although these methods achieve high-fidelity video generation and precise motion control, they fall short in long horizon scene generation task as the rapid scene changes increase extra challenges for pure autoregressive generation.

\noindent\textbf{Long Horizon Scene Generation.}
Long-horizon scene generation targets temporally consistent, controllable rollouts that remain plausible over hundreds to thousands of frames, where the core challenges are compounding drift, long-context scaling, and multi-view consistency. One line of work extends diffusion world models to make long rollouts more stable and controllable: VISTA~\cite{vista} and GAIA~\cite{russell2025gaia} emphasize richer conditional interfaces and design choices that better preserve structure across iterative sampling, while Epona~\cite{epona} explicitly frames long-horizon synthesis as autoregressive diffusion and trains the model to tolerate its own rollout errors. A second family pursues drift-bounding objectives for interactive generation: LIVE~\cite{live} constrains long-term deviation through consistency-style training signals, aiming to prevent unbounded error accumulation without relying solely on teacher forcing.
\section{Preliminary}
\label{sec:preliminary}

\noindent\textbf{Autoregressive Generation.} 
Long-horizon video generation is typically formulated as an iterative factorization of the joint distribution conditioned on the control signals $\mathcal{C}$:
\begin{equation}
    p(\mathcal{V} | \mathcal{C}) = \prod_{i=1}^{S} p(S_i | S_{<i}, \mathcal{C}),
\end{equation}
where the sequence $\mathcal{V}$ is partitioned into $S$ contiguous blocks $\{S_1, \dots, S_S\}$ of size $B$. This unified formulation encompasses various granularities of generation depending on the choice of $B$: \textit{token-level} AR where $B$ represents sub-frame units such as spatial-temporal patches, \textit{frame-level} AR where $B=1$ for sequential frame-wise rollout, and \textit{segment-level} AR where $B>1$ for simultaneous synthesis of multiple frames. 

Architecturally, these paradigms employ intra-block bidirectional attention for temporal correlation and inter-block causal dependency for scalability. While a larger block size $B$ improves generation quality and consistency by expanding the bidirectional receptive field, it increases training overhead in memory and computation, necessitating a balanced design for scalable long-video synthesis.

\noindent\textbf{Divide-and-Conquer Paradigm.} 
An alternative strategy for long-video synthesis involves a hierarchical decomposition of the generation task into global structural planning and local detail refinement. This paradigm is typically formulated by introducing a sparse structural proxy $\mathcal{K}$, such as keyframes or semantic layouts, which abstracts the temporal evolution of the sequence:
\begin{equation}
    p(\mathcal{V} | \mathcal{C}) = p(\mathcal{K} | \mathcal{C}) p(\mathcal{V} | \mathcal{K}, \mathcal{C}).
\end{equation}
Establishing the global trajectory via $\mathcal{K}$ transforms dense synthesis into a constrained interpolation task. This hierarchy bounds prediction errors within structural anchors, preventing the unbounded drift typical of autoregressive rollout. While decoupling long-range structure from local synthesis enables scalable generation, the paradigm remains sensitive to the noise inherent in the structural proxy $\mathcal{K}$ and requires high-fidelity conditional mapping in $p(\mathcal{V} | \mathcal{K}, \mathcal{C})$ to maintain overall consistency.
\section{Methods}
\label{sec:methods}
\begin{figure}[t]
  \centering
   \includegraphics[width=0.999\linewidth]{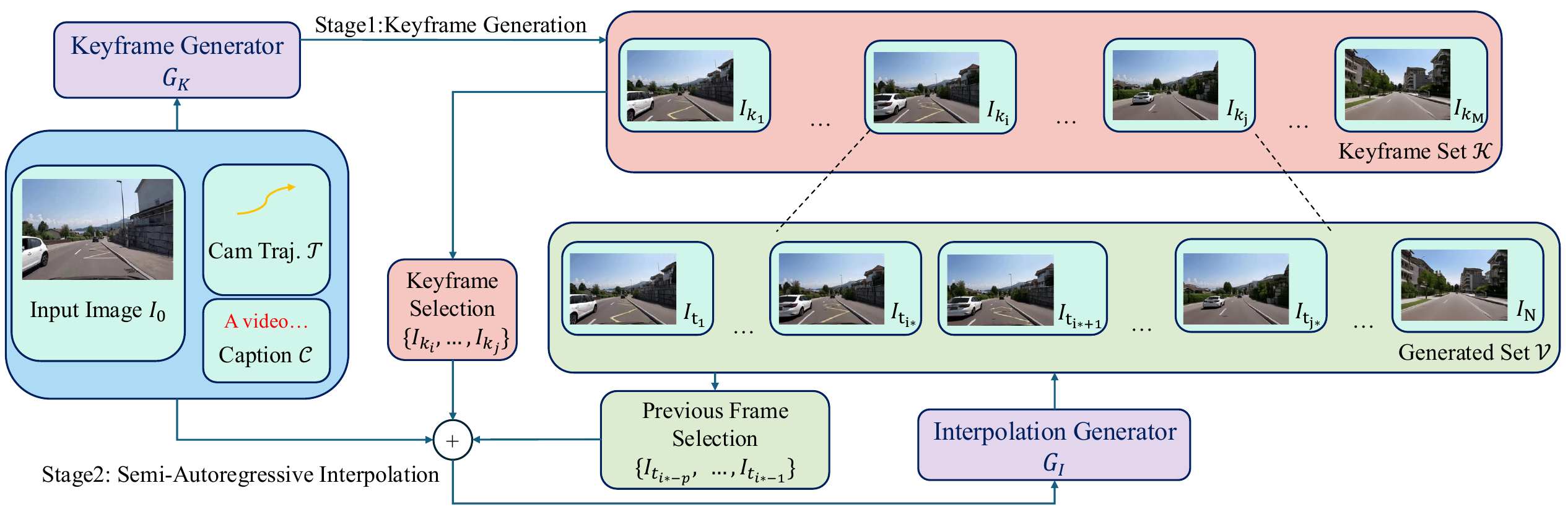}

    \caption{
    \textbf{Our Two-Stage Autoregressive Pipeline for Long-Term Video Synthesis.}
    Our model synthesizes a long video sequence $\mathcal{V}=\{I_{1}, \dots, I_N\}$ conditioned on multimodal inputs: the initial image $I_0$, camera trajectory $\mathcal{T}$, and caption $\mathcal{C}$. \textbf{The Keyframe Generator} $G_K$ produces a global Keyframe Set $\mathcal{K}$, establishing the overall structural evolution of the video. \textbf{The Interpolation Generator} $G_I$ generates the final frames in an autoregressive manner.}
   \label{fig:main_pipeline}
\end{figure}



\subsection{Overview}
\label{subsec:formulation}

We define the task of long-trajectory scene generation as generating a dense video sequence $\mathcal{V} = \{I_1, I_2, \dots, I_N\}$ of length $N$, conditioned on an initial frame $I_0$ and a sequence of control signals (e.g., camera poses) $\mathcal{T} = \{c_1, c_2, \dots, c_N\}$. The goal is to model the conditional distribution $p(\mathcal{V} | I_0, \mathcal{T})$.

\label{sec:divide}

As illustrated in Fig.~\ref{fig:main_pipeline}, our framework follows a two-stage divide-and-conquer pipeline. In the first stage, the keyframe generator $G_K$ synthesizes a sparse keyframe set $\mathcal{K}=\{I_{k_1}, \dots, I_{k_M}\}$ conditioned on the initial image $I_0$, camera trajectory $\mathcal{T}$, and caption $\mathcal{C}$. These keyframes establish the overall structural evolution of the video. In the second stage, the interpolation generator $G_I$ synthesizes the dense sequence $\mathcal{V}$ in a segment-wise manner. 

Following the autoregressive paradigm, $G_I$ synthesizes each segment $S_i$ by generating its constituent frames simultaneously. The generation is conditioned on a selected subset of keyframes $\{I_{k_i}, \dots, I_{k_j}\}$ and a history of $p$ previously generated frames $\{I_{t_i^* - p}, \dots, I_{t_i^* - 1}\}$ with $t_i^*$ being the first frame index of the current segment. This design decouples global structural planning from local detail interpolation, effectively reducing cumulative drift while maintaining precise adherence to the given trajectory.

\subsection{Keyframe Generation}
\label{sec:keyframe_gen}

The keyframe generator $G_K$ generates the keyframe set $\mathcal{K}$ as defined in ~\cref{sec:divide}. Specifically, we implement this mapping through a DiT-based\cite{dit} flow matching process $\Phi_K$:
\begin{equation}
    \mathcal{K} = \mathcal{D}\big(\Phi_K(\epsilon \mid \mathcal{E}(I_0), \mathcal{P}, \mathcal{C})\big),
\end{equation}
where $\epsilon \sim \mathcal{N}(0, \mathbf{I})$ is the initial Gaussian noise. Here, $\mathcal{E}$ and $\mathcal{D}$ denote the VAE encoder and decoder, and $\mathcal{P} = \{c_{k_1}, \dots, c_{k_M}\}$ represents the sampled poses from the global trajectory $\mathcal{T}$. Our insight is generating keyframes would achieve bounded or mitigated error accumulation as illustrated by the following proposition:

\begin{proposition}[Temporal Degradation via Perturbation Analysis]
\label{prop:perturbation_error}
Let $e_t = I_t - I_t^*$ denote the cumulative generation error at frame $t$, where $I_t$ and $I_t^*$ are the generated and ground-truth frames. Modeling the autoregressive (AR) generation as $I_{t+1} = \mathcal{F}(I_t, \epsilon_t)$, where $\mathcal{F}$ is the generator mapping and $\epsilon_t$ is the sampling noise. Assuming $\mathcal{F}$ is $L$-Lipschitz continuous with respect to the state ($L = \sup \|\nabla_I \mathcal{F}\| \ge 1$), a first-order perturbation analysis yields the cumulative error bound $\|e_N^{base}\| \leq \sum_{j=1}^N L^{N-j} \|\eta_j\|$, where $\eta_j$ represents the local injection error at step $j$. If $\|\eta_j\|$ is bounded by a constant $\eta$, this cumulative error diverges at least linearly $\mathcal{O}(N)$ (and exponentially if $L > 1$). Our divide-and-conquer strategy mitigates this via keyframe anchoring; the interpolation error at any frame $\|e_t^{ours}\|$ is bounded by the convex combination of its anchoring keyframe errors plus a local constant.
\end{proposition}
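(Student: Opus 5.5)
We prove the AR bound by linearizing the one-step recursion and unrolling it, and then bound the interpolation error by an analogous argument that starts from the keyframes rather than from $I_0$.

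\emph{Error recursion.} Write the ground-truth dynamics as $I^*_{t+1} = \mathcal{F}(I^*_t, \epsilon_t) - \eta_{t+1}$. Here $\eta_{t+1}$ is by definition the local injection error: the discrepancy the generator commits in one step even when fed the exact state, so it absorbs model error and sampling mismatch. Subtracting this from $I_{t+1} = \mathcal{F}(I_t,\epsilon_t)$ gives
\begin{equation*}
e_{t+1} = \mathcal{F}(I_t,\epsilon_t) - \mathcal{F}(I^*_t,\epsilon_t) + \eta_{t+1}.
\end{equation*}
By the mean value inequality along the segment between $I^*_t$ and $I_t$ (first order, or exactly via the Lipschitz assumption), $\|\mathcal{F}(I_t,\epsilon_t)-\mathcal{F}(I^*_t,\epsilon_t)\| \le L\|e_t\|$. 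Hence $\|e_{t+1}\| \le L\|e_t\| + \|\eta_{t+1}\|$.

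\emph{Unrolling.} Take $e_0=0$, since $I_0$ is given. Induction then yields $\|e_N^{base}\| \le \sum_{j=1}^N L^{N-j}\|\eta_j\|$. With $\|\eta_j\|\le\eta$ this sum equals $N\eta$ when $L=1$ and $\eta (L^N-1)/(L-1)$ when $L>1$. Strictly speaking these are upper bounds, so the claim that the error "diverges at least linearly" must be read as growth of the worst case. To make it rigorous we would exhibit the tight case: take an affine $\mathcal{F}$ with Jacobian of norm $L$ along a fixed direction $v$, with every $\eta_j$ aligned to $v$. Then the inequalities are equalities and the error genuinely grows as $\Theta(N)$ or $\Theta(L^N)$.

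\emph{Divide-and-conquer bound.} The keyframes are produced jointly by $\Phi_K$ in a single non-recursive pass, so each keyframe error satisfies $\|e_{k_m}\| \le \eta_K$ with no compounding in $m$. This is an explicit modeling assumption; we would state it as the defining property of joint generation. For a frame $t$ between anchors $k_a<t<k_b$, model the interpolator to first order as
\begin{equation*}
I_t = \alpha_t I_{k_a} + (1-\alpha_t) I_{k_b} + r_t, \qquad \alpha_t\in[0,1],
\end{equation*}
and assume the ground truth satisfies the same relation with residual $r^*_t$. We assume $\|r_t - r^*_t\|\le \delta$, which covers the local synthesis error together with the single-frame history conditioning whose span is bounded by the segment length. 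Subtracting the two relations and applying the triangle inequality gives
\begin{equation*}
\|e_t^{ours}\| \le \alpha_t\|e_{k_a}\| + (1-\alpha_t)\|e_{k_b}\| + \delta \le \eta_K + \delta,
\end{equation*}
which is uniform in $t$ and $N$.

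\emph{Main obstacle.} The hardest step is justifying the interpolation model. The actual $G_I$ also conditions on the preceding generated frame, so a residual AR chain runs within each segment. We would handle it by applying the first bound locally. The chain length is at most the keyframe gap $g$, so its contribution is bounded by $\sum_{j\le g} L^{g-j}\eta$. This constant is independent of $N$ and can be folded into $\delta$, giving the "local constant" of the statement.
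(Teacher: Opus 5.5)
Your Part I follows the paper's argument: add and subtract $\mathcal{F}(I_t^*,\epsilon_t)$, then unroll. Your exact Lipschitz step and the aligned affine example are, if anything, a cleaner reading of ``at least linearly'' than the paper's bias/variance lower bound. Your convex-combination model is also the paper's Step 1 decomposition $e_{iT+\tau}=(1-\frac{\tau}{T})e(K_i)+\frac{\tau}{T}e(K_{i+1})+\delta_i(\tau)+w_\tau$, with your $r_t-r_t^*$ playing the role of $\delta_i+w_\tau$.

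The gap is in the step you flag yourself: showing $\|r_t-r_t^*\|\le\delta$ with $\delta$ independent of $N$, which is the whole content of the final clause. Your justification, that the residual chain has length at most the keyframe gap $g$, does not hold for this pipeline. In $G_I$ the frames of a segment are denoised jointly, and the autoregressive link runs \emph{across} segments. $S_i$ is conditioned on the clean last generated frame of $S_{i-1}$, which was conditioned on $S_{i-2}$, and so on. So the chain has one link per segment and its length grows linearly in $N$. Nothing in your model, such as a requirement that $r_t=r_t^*$ at anchor indices, resets the error at a keyframe. Applied with a nonzero incoming history error $e_h$, the Part I bound over one gap reads $L^{g}\|e_h\|+\sum_{j\le g}L^{g-j}\eta$. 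With $L\ge 1$ this gives only a bound that compounds from segment to segment, i.e.\ the same $\mathcal{O}(N)$ or $\mathcal{O}(L^N)$ growth as pure AR. The constant $\sum_{j\le g}L^{g-j}\eta$ is just the contribution of a chain started from zero error.

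What is missing is a mechanism by which the anchors dissipate the incoming history error. One option is an explicit reset assumption: the residual error vanishes at anchor frames, so each gap restarts from error at most $\eta_K$. The other is a contraction across segment boundaries, e.g.\ $\|r_t-r_t^*\|\le\beta\|e_{h(t)}\|+\delta_0$ with $\beta<1$, which gives $\sup_t\|e_t\|\le(\eta_K+\delta_0)/(1-\beta)$ independent of $N$.

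The paper's Step 2 supplies an argument of the second kind. It models the leakage $\delta_i$ as the minimizer of $\int_0^T(\delta_i'')^2\,d\tau$ under four conditions: positions pinned at the anchors ($\delta_i(0)=\delta_i(T)=0$), incoming velocity error $\delta_i'(0)=\Delta v_{i-1}$, and the natural condition $\delta_i''(T)=0$. The resulting cubic transmits $\Delta v_i=-\frac{1}{2}\Delta v_{i-1}$, so the leaked error decays geometrically. The per-segment peak $\frac{\sqrt{3}}{9}T\|\Delta v_{i-1}\|$ then yields the $N$-independent bound $0.384\,T\|\Delta v_0\|$. The local noise is bounded separately by the maximal bridge variance $\frac{T}{4}\sigma_{int}^2$. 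Without a damping or reset step of this kind, your local constant $\delta$ is an assumption rather than a consequence.
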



Please refer to the supplementary material for complete proof. To ensure structural integrity and control precision, we incorporate the following designs:

\noindent\textbf{Spatial-Structural Preservation.} Standard temporal VAEs typically employ temporal compression, which is inherently lossy and degrades visual quality for sparse sequences with significant spatial displacement. To circumvent this, we treat the $M$ keyframes as a batch of independent images by reshaping the frame dimension into the batch dimension during encoding and decoding. This ensures that each generated keyframe maintains full spatial fidelity, providing a reliable foundation for subsequent interpolation.

\noindent\textbf{Keyframe Sampling Strategy.} We construct the structural skeleton by sampling keyframes from the sequence using a temporal stride $\Delta k$. To ensure robustness across diverse motion scales, $\Delta k$ is randomly drawn from a discrete set of candidate strides during training. During testing, a constant stride $\Delta k_{\text{test}}$ is maintained. The selection of $\Delta k_{\text{test}}$ serves as a critical hyperparameter that modulates the trade-off between the computational efficiency of the keyframe generation process and the precision of the structural guidance provided to the interpolation model.


\subsection{Interpolation Generation}
\label{sec:interpolation_gen}


The interpolation generator $G_I$ synthesizes the dense video sequence $\mathcal{V}$ in a segment-wise manner. Specifically, for each segment $S_i$, we construct an input latent sequence $\mathbf{z}_{in}$ by concatenating the noise with temporal context. The DiT-based flow matching process $\Phi_I$ is formulated as:
\begin{equation}
\begin{aligned}
    S_i &= G_I(\mathcal{H}_i, \mathcal{K}_i, \mathcal{T}_i, \mathcal{C}) \\
    &= \mathcal{D}\big(\Phi_I(\mathbf{z}_{in}, \mathcal{T}_i, \mathcal{C})\big),
\end{aligned}
\end{equation}
where $\mathbf{z}_{in} = [\epsilon, \mathbf{z}_{\mathcal{H}_i}, \mathbf{z}_{\mathcal{K}_i}]$ represents the sequence formed by concatenating the initial Gaussian noise $\epsilon$, the encoded history $\mathbf{z}_{\mathcal{H}_i} = \mathcal{E}(\mathcal{H}_i)$, and the selected keyframes $\mathbf{z}_{\mathcal{K}_i} = \mathcal{E}(\mathcal{K}_i)$ along the temporal dimension.

\noindent\textbf{Keyframe Selection Policy.} For each segment $S_i = \{I_{t_i^*}, \dots, I_{t_j^*}\}$ and the corresponding trajectory $\mathcal{T}_i = \{c_{t_i^*}, \dots, c_{t_j^*}\}$ in the interpolation stage, we select a local subset of keyframes $\mathcal{K}_i \subset \mathcal{K}$ to provide temporal context. This subset encompasses all keyframes within the segment's temporal span, along with the nearest keyframes that immediately precede and follow the segment:
\begin{equation}
    \mathcal{K}_i = \{ I_k \in \mathcal{K} \mid k_{pre} \leq k \leq k_{next} \},
\end{equation}
where $k_{pre} = \max_{k: I_k \in \mathcal{K}, k < t_i^*} k$ and $k_{next} = \min_{k: I_k \in \mathcal{K}, k > t_j^*} k$. This policy ensures that each segment is synthesized with both look-back and look-ahead structural constraints, preventing trajectory drift at the segment boundaries.


\noindent\textbf{Autoregressive Rollout.} To synthesize long-trajectory videos, $G_I$ operates via a sliding window. During each sampling step of the denoising process, the noisy latent $\mathbf{z}$ (which originates from $\epsilon$) is continuously updated while being anchored by the fixed historical tokens $\mathbf{z}_{\mathcal{H}_i}$ and global keyframe tokens $\mathbf{z}_{\mathcal{K}_i}$ within $\mathbf{z}_{in}$. This hybrid conditioning allows the model to leverage both local continuity and global anchors in a unified attention space, which effectively mitigates exposure bias and error accumulation compared to purely autoregressive methods, while maintaining superior adherence to the camera trajectory $\mathcal{T}_i$.

\noindent\textbf{Motion-Inductive Noisy Conditioning.} \label{sec:noisy_keyframe}A common failure mode in interpolation is keyframe duplication, where the model replicates pixels from keyframes instead of synthesizing motion. This identity mapping shortcut occurs because clean keyframes $\mathbf{z}_{\mathcal{K}_i}$ allow the model to minimize training loss by anchoring to visual guides rather than learning complex temporal transitions, leading to stuttering or unnatural transitions. To mitigate this, we perturb the keyframe latents with a consistent noise level during both training and inference:
\begin{equation}
    \tilde{\mathbf{z}}_{\mathcal{K}_i} = \alpha_{c} \mathbf{z}_{\mathcal{K}_i} + \sigma_{c} \epsilon, \quad \epsilon \sim \mathcal{N}(\mathbf{0}, \mathbf{I}),
\end{equation}
where the coefficients $\alpha_{c}$ and $\sigma_{c}$ are hyperparameters that control the keyframe guidance and the noise intensity, respectively. This perturbation breaks the pixel-level shortcut, forcing $\Phi_I$ to prioritize underlying motion dynamics and trajectory adherence.


\noindent\textbf{Seamless Boundary Consistency.} To eliminate visual fractures at segment junctions, we implement a latent substitution strategy consistently during training and inference. Specifically, we define a $p$-frame overlap between segments and replace the first $p$ latents of segment $S_i$ with historical latents $\mathbf{z}_{\mathcal{H}_i}$ at each denoising step $\tau$. To ensure generalization to these deterministic boundaries, $\mathbf{z}_{\mathcal{H}_i}$ is kept noise-free during $\Phi_I$ training. By bridging the distribution gap, this design enforces strict continuity and prevents flickering or motion jumps, ensuring a seamless temporal flow across all segment boundaries.
\section{Experiments}
\begin{figure}[!t]
  \centering
  \includegraphics[width=0.9\linewidth]{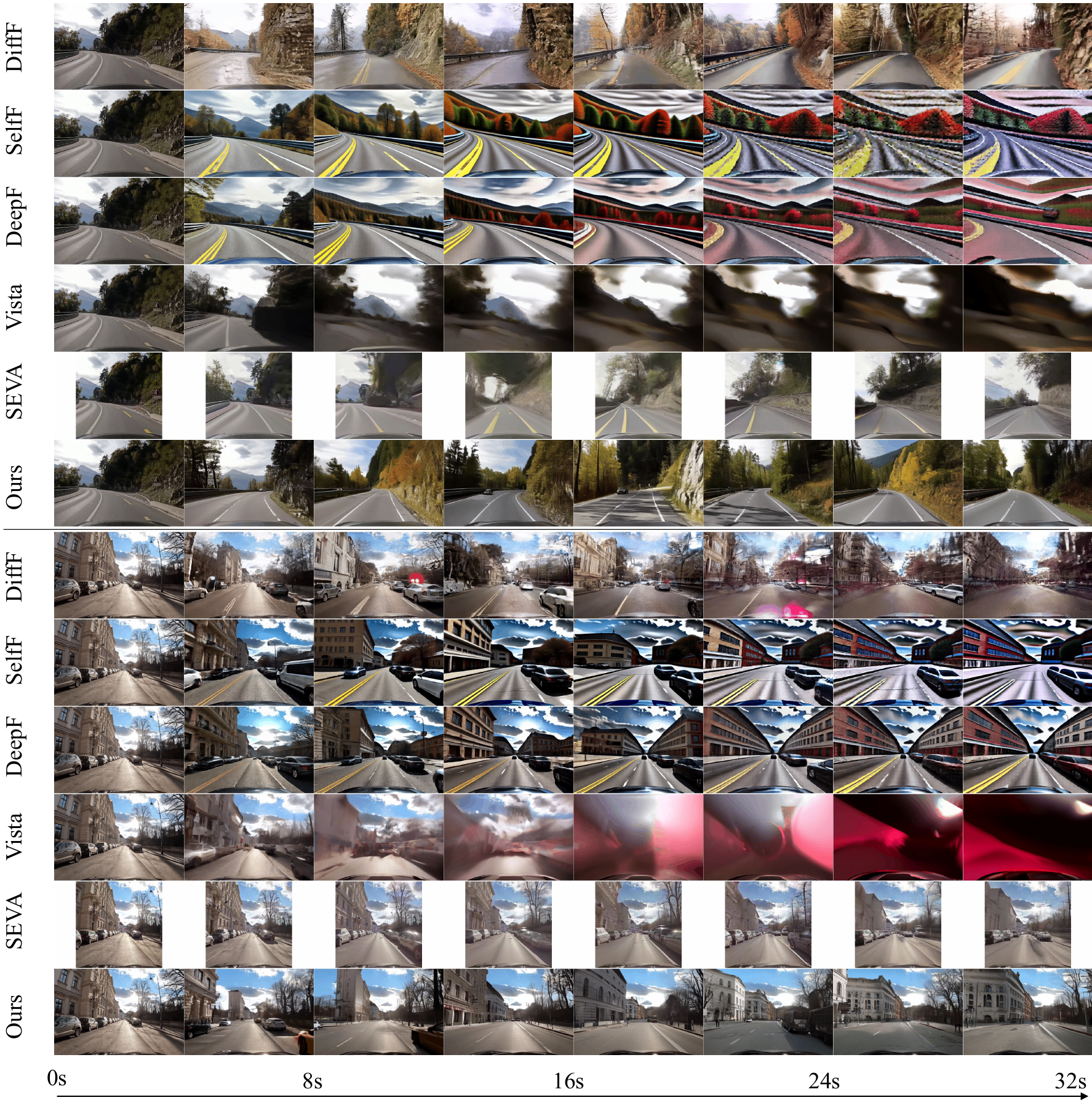}
  \caption{Qualitative comparison of long-trajectory video generation. We compare our method against several state-of-the-art baselines. Each method is provided with the same initial frame as input. Our approach demonstrates superior long-trajectory consistency and visual fidelity over a 32-second duration.} 
  \label{fig:main_qual_res} 
\end{figure}
\subsection{Experiments Setup}

\noindent\textbf{Implementation Details.} Our model is built upon the Wan2.1-T2V-1.3B \cite{wan_2_1} architecture. We integrate camera control by adding camera feature maps to visual tokens within each DiT block, following the ReCamMaster \cite{recammaster} design. Both keyframe and interpolation generators are trained for 30,000 steps using the AdamW optimizer (LR: $5 \times 10^{-5}$, weight decay: 0.01, $\beta$: (0.9, 0.95)) with an effective batch size of 16 across 8 NVIDIA H100 GPUs. 

We process a 480-hour subset of the OpenDV-YouTube (ODV-YT) dataset \cite{opendv}, segmented into 1-minute clips at 10 fps. Camera poses are reconstructed using $\pi_3$ \cite{pi3} every 0.5 seconds, with intermediate frames recovered via spherical linear and linear interpolation. Semantic scene descriptions are generated by Qwen2.5-Omni \cite{qwen2_5omni} from frames sampled at 20-second intervals.

For keyframe generation, the model is trained with $|K|=21$ and temporal strides $\Delta k \in \{4, 8, 16\}$, while $\Delta k_{\text{test}}=8$ is fixed during inference. For the interpolation stage, the number of conditioning keyframes $|K|$ is randomly sampled from $[1, 10]$ during training. In the inference phase, we apply Motion-Inductive Noisy Conditioning with $\alpha_c=0.7$, $\sigma_c=0.3$, and set the boundary overlap to $p=1$. Please refer to the supplementary material for further details.

\noindent\textbf{Baselines.} We compare our method with Diffusion Forcing (DiffF)~\cite{df1}, Self Forcing(SelfF)~\cite{selfforcing}, Deep Forcing(DeepF)~\cite{deep_forcing}, Vista~\cite{vista} and Stable Virtual Camera (SEVA)~\cite{seva}. All models are finetuned for fair comparison. 

\noindent\textbf{Datasets.} To evaluate the capability of generating videos across large spatial spans and maintain structural integrity in expansive static scenes, we conduct experiments on two large-scale datasets: the ODV-YT validation set and the nuScenes~\cite{nuscenes2019} validation set for zero-shot testing. All experiments are conducted on 150 sequences.

\noindent\textbf{Evaluation Metrics.} For \textit{video quality}, we use Fréchet Inception Distance (FID)~\cite{fid} and Fréchet Video Distance (FVD)~\cite{fvd} as primary metrics, reporting both overall and per-segment scores to assess stability. For \textit{camera control accuracy}, we utilize $\pi_3$~\cite{pi3} to extract 6-DoF camera poses and report Absolute Trajectory Error (ATE) and Absolute Rotation Error (ARE) after alignment. For \textit{segment continuity}, we further use Motion Smoothness (MS)~\cite{huang2024vbench++}, PSNR and SSIM~\cite{ssim} to evaluate the discrepancy between adjacent generated segments.

\begin{table}[!t]
\centering
\caption{Quantitative comparison on 32-second videos of the ODV-YT dataset. Metrics are reported as \textbf{FID / FVD} for visual quality and temporal continuity, and \textbf{ATE / ARE} for global path adherence. Our method maintains superior stability and fidelity over extended durations.}
\label{tab:main_res}
\small
\resizebox{\linewidth}{!}{
\begin{tabular}{lccccccc}
\toprule
Method & Overall FID/FVD$\downarrow$ & 0-8 s & 8-16 s & 16-24 s & 24-32 s & ATE $\downarrow$ & ARE $\downarrow$ \\ 
\midrule
DiffF & 35.0 / 664.1 & 25.3 / 390.7 & 39.8 / 684.5 & 45.2 / 856.8 & 54.1 / 1067.1 & 0.469 & 19.448 \\ 
SelfF & 58.0 / 2113.6 & 26.2 / 533.5 & 62.8 / 1911.3 & 92.8 / 3402.2 & 119.4 / 4183.8 & 0.610 & 14.386 \\
DeepF & 42.3 / 1558.5 & 26.1 / 545.3 & 49.1 / 1472.3 & 63.7 / 2342.2 & 77.2 / 2832.6 & 0.571 & 15.144 \\
Vista & 66.7 / 1550.0 & 29.1 / 428.7 & 66.3 / 1212.6 & 103.9 / 2452.3 & 134.1 / 3774.5 & 0.641 & 19.332 \\
SEVA & 22.2 / 548.0 & 27.2 / 582.0 & 30.7 / 643.2 & 31.8 / 623.8 & 33.1 / 602.0 & 0.294 & 8.527 \\ 
\textbf{Ours} & \textbf{19.2 / 203.7} & \textbf{19.6 / 191.4} & \textbf{25.1 / 255.8} & \textbf{27.3 / 276.0} & \textbf{28.6 / 313.8} & \textbf{0.237} & \textbf{7.669} \\ 
\bottomrule
\end{tabular}
}
\end{table}

\begin{table}[!h]
  \centering
  \caption{Quantitative evaluation of zero-shot generalization on 16-second videos of the nuScenes dataset. Our approach achieves consistent high-fidelity generation and precise camera control in both FVD and ATE.}
  \label{tab:main_res_nuscenes}
  \small
  \resizebox{\linewidth}{!}{
  \begin{tabular}{lccccccc}
    \toprule
      Method & Overall FID/FVD$\downarrow$ & 0-4 s & 4-8 s & 8-12 s & 12-16 s & ATE $\downarrow$ & ARE $\downarrow$ \\ 
      \midrule
      DiffF  & 37.1 / 566.6 & 29.3 / 376.3 & 41.8 / 573.7 & 51.7 / 738.9 & 60.9 / 906.2 & 0.154 & 13.704 \\
      SelfF & 53.1 / 711.2 & 53.1 / 394.8 & 59.3 / 592.1 & 69.1 / 989.0 & 78.7 / 1504.7 & 0.092 & 12.464 \\
      DeepF & 50.3 / 618.4 & 52.5 / 343.3 & 59.0 / 577.3 & 64.7 / 887.2 & 69.9 / 1174.8 & 0.104 & 10.774 \\
      Vista & 68.7 / 1239.1 & 27.3 / 445.1 & 54.2 / 847.0 & 111.8 / 1860.8 & 160.5 / 2966.4 & 0.195 & 16.607 \\
      SEVA  & 35.9 / 487.9 & 30.2 / 493.7 & 41.3 / 529.1 & 54.7 / 595.2 & 72.1 / 769.4 & 0.117 & 6.289 \\
      \textbf{Ours}  & \textbf{19.6 / 225.4} & \textbf{23.3 / 239.9} & \textbf{26.5 / 264.3} & \textbf{31.1 / 323.0} & \textbf{30.8 / 345.2} & \textbf{0.045} & \textbf{5.274}  \\
      \bottomrule
  \end{tabular}
  }
\end{table}

\begin{figure}[h]
  \centering
   \includegraphics[width=0.95\linewidth]{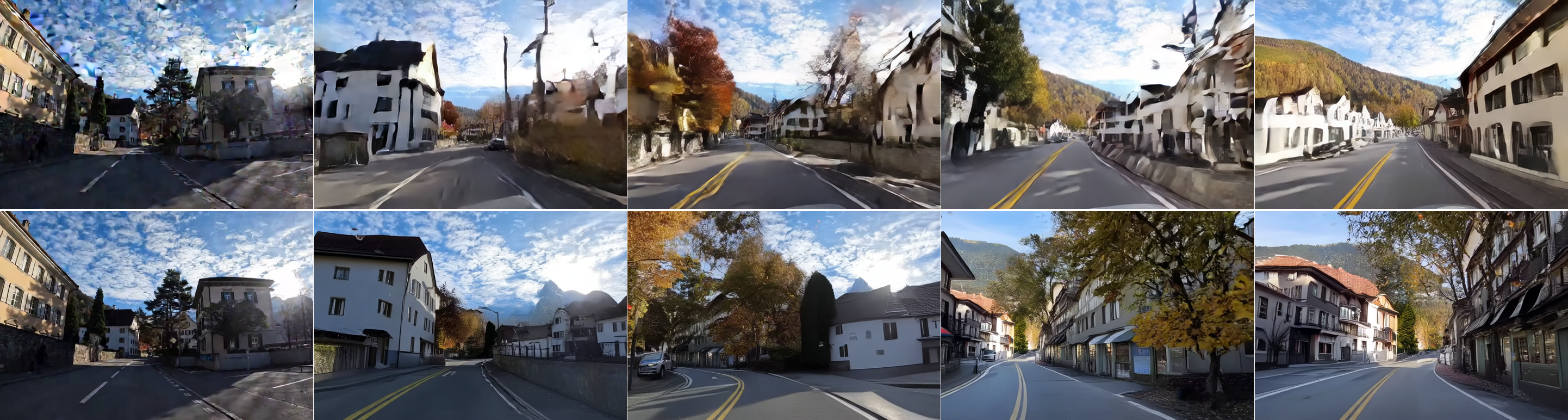}
    \caption{Qualitative comparison of keyframe model designs. The top row shows the results of the \textit{temporal compression} baseline, while the bottom row shows our proposed design. Our approach provides significantly clearer visual details and superior structural accuracy compared to the baseline.}
   \label{fig:ablation_keyframe}
\end{figure}

\subsection{Comparison on Long Trajectory Generation.} 
In \cref{fig:main_qual_res}, \cref{tab:main_res}, and \cref{tab:main_res_nuscenes}, we present qualitative and quantitative comparisons. While autoregressive baseline methods can generate reasonable short-term results, they suffer from severe error accumulation and temporal drifting as the video length increases. Notably, even with the incorporation of specialized strategies designed to mitigate drifting, these methods still exhibit sharp performance degradation on extended sequences. In contrast, our divide-and-conquer autoregressive design leverages global keyframes as structural anchors to ensure both long-range visual stability and effective camera control. This advantage is further demonstrated by our robust zero-shot generalization to unseen environments. When evaluated on the nuScenes dataset, our approach consistently maintains high generation quality and precise path adherence, significantly surpassing baseline methods without requiring additional fine-tuning. Besides, our method outperform SEVA in terms of camera controllability and video quality. Quantitatively, these results confirm that our framework remains remarkably stable across the entire horizon, providing superior performance in both in-distribution and out-of-distribution scenarios. For more visualization results, please refer to the supplementary material.

\subsection{Ablation Study}

\noindent\textbf{Keyframe Spatial-Structural Preservation.} We compare our image-level encoding with a temporal compression baseline to evaluate its necessity for sparse keyframe generation. Results are calculated using the generated keyframe sequences for 16-second videos on ODV-YT. As shown in \cref{tab:abl_encoding_strategy} and \cref{fig:ablation_keyframe}, applying temporal compression to keyframes with large spatial spans leads to significant performance drops. Specifically, the spatial displacement between distant keyframes makes the VAE's 4:1 temporal compression inherently lossy, which severely degrades quality (higher FID). Furthermore, such compression obscures fine-grained geometric cues, hindering the camera encoder from learning precise representations, as evidenced by the elevated ATE and ARE.
\begin{table}[h]
\small
  \caption{Ablation study on keyframe model design. We compare our proposed image-level encoding against a \textit{temporal compression} baseline, which treats keyframes as a video sequence with temporal downsampling. Our design achieves superior performance in both visual quality and trajectory adherence in keyframe generation.}
  \centering
  \begin{tabular}{@{}lccc@{}}
    \toprule
    Model Design & FID$\downarrow$ & ATE$\downarrow$ & ARE$\downarrow$ \\
    \midrule
    Temporal compression & 19.6 & 0.155 & 7.276 \\
    Ours & \textbf{16.3} & \textbf{0.100} & \textbf{3.999} \\
    \bottomrule
  \end{tabular}
  \label{tab:abl_encoding_strategy}
\end{table}

\begin{table}[h]
  \centering
  \small
  \caption{Ablation on keyframe intervals for generation ($\Delta k_{gen}$) and interpolation ($\Delta k_{int}$). Keyframes are initially generated at stride $\Delta k_{gen}$ and subsequently filtered to $\Delta k_{int}$ for dense interpolation.}
  \label{tab:keyframe_ablation_transposed}
  \begin{tabular}{@{}lcccccc@{}}
    \toprule
    $(\Delta k_{gen}, \Delta k_{int})$ & (4, 4) & (4, 8) & (4, 16) & \textbf{(8, 8)} & (8, 16) & (16, 16) \\
    \midrule
    FID $\downarrow$ & 21.4 & 21.0 & 20.9 & 19.3 & 19.0 & \textbf{18.8} \\
    FVD $\downarrow$ & 201.8 & 199.9 & 204.6 & 196.0 & 201.5 & \textbf{187.8} \\
    ATE $\downarrow$ & 0.148 & 0.147 & 0.131 & \textbf{0.096} & 0.098 & 0.104 \\
    ARE $\downarrow$ & 4.98 & 4.91 & 5.08 & \textbf{3.96} & 4.06 & 4.74 \\
    \bottomrule
  \end{tabular}
\end{table}

\noindent\textbf{Keyframe Sampling Strides.} 
We investigate the impact of sampling strategies by varying the generation stride $\Delta k_{gen}$ and interpolation stride $\Delta k_{int}$ on 16-second ODV-YT video generation. Specifically, keyframes are initially synthesized at a stride of $\Delta k_{gen}$ and subsequently filtered to the interpolation stride $\Delta k_{int}$ to serve as conditioning anchors for the dense synthesis stage. As shown in \cref{tab:keyframe_ablation_transposed}, $\Delta k_{gen}$ is the primary determinant of both visual quality and trajectory fidelity. Increasing $\Delta k_{gen}$ from 4 to 16 consistently enhances the quality of structural anchors, which directly improves the final FID and FVD. In contrast, the interpolation stride $\Delta k_{int}$ exerts a negligible influence on the overall performance. Regarding trajectory adherence, we find that neither excessively dense nor sparse generation is optimal for ATE and ARE; the former introduces redundant constraints that hinder motion naturalness, while the latter provides insufficient guidance to bound drift. Our balanced configuration of $(8, 8)$ achieves the optimal trade-off between structural stability and generative quality.

\begin{figure}[!t]
  \centering
   \includegraphics[width=0.99\linewidth]{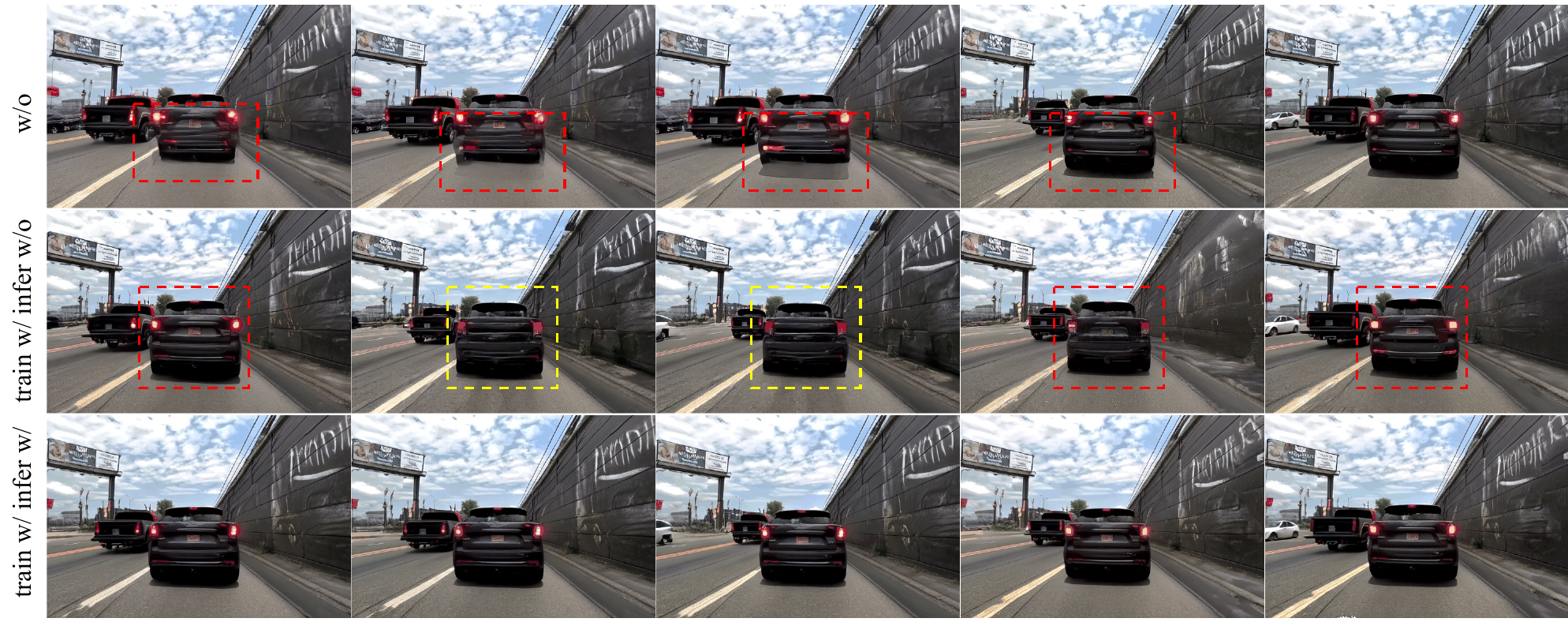}

    \caption{Ablation on noisy keyframe design. We compare \textit{W/o noisy keyframe}, \textit{Train w/ noisy keyframe}, and our full \textit{Train/infer w/ noisy keyframe} strategy. Applying noise during both stages effectively mitigates memory bias and ensures smoother transitions.}
    
   \label{fig:Ablation_Interpolation_Memnoise}
\end{figure}

\begin{table}[!t]
  \small
  \caption{Ablation on noisy keyframe design. All methods take the same generated keyframes as input. Applying noisy keyframes during both training and inference achieves the best dynamic video quality (lowest FVD).}
  \centering
  \begin{tabular}{@{}lcc@{}}
    \toprule
    Training Design & FID$\downarrow$ & FVD$\downarrow$  \\
    \midrule
    W/o noisy keyframe &  \textbf{19.0} & 246.5 \\
    Train w/ noisy keyframe &  19.9 & 240.9 \\
    Train/infer w/ noisy keyframe & 19.3 & \textbf{201.7} \\
    \bottomrule
  \end{tabular}
  \label{tab:abl_memnoise}
\end{table}
\label{sec:memnoise_design}
\noindent\textbf{Motion-Inductive Noisy Conditioning.} 
Conditioning the interpolation model on noisy keyframes is essential for high-quality synthesis, as discussed in \cref{sec:noisy_keyframe}. We evaluate this in \cref{tab:abl_memnoise} and \cref{fig:Ablation_Interpolation_Memnoise} on 16-second ODV-YT sequences. Quantitatively, we observe that while FID remains relatively stable, the FVD significantly deteriorates when noise is not consistently applied during both training and inference. Qualitative results in \cref{fig:Ablation_Interpolation_Memnoise} reveal two distinct artifact patterns: (1) \textit{unnatural transitions}, such as fade-like blending between keyframes, and (2) \textit{motion stagnation}, where the model replicates imperfect conditioning keyframes. These temporal artifacts—which involve mode collapse where the model copies pixels rather than learning motion dynamics—explain why the FVD increases sharply despite the stable FID. This confirms that joint noisy training and inference forces the model to bypass "copy-paste" shortcuts and achieve smoother, more realistic video generation.

\begin{table}[t]
  \small
  \caption{Ablation on segment continuity design. We compare \textit{Overlap} (simple overlapping), \textit{+Sub} (latent substitution), and our full \textit{+Sub+Train} strategy. Our method effectively bridges segment junctions, achieving the best motion smoothness and structural fidelity.}
  \centering
  \setlength{\tabcolsep}{4pt} 
  \begin{tabular}{@{}lccc@{}}
    \toprule
    Design & MS$\uparrow$ & PSNR$\uparrow$ & SSIM$\uparrow$ \\
    \midrule
    Overlap & 0.969 & 18.15 & 0.534  \\
    + Sub & 0.968 & 17.95 & 0.524  \\
    + Sub + Train & \textbf{0.972} & \textbf{18.75} & \textbf{0.562} \\ 
    \bottomrule
  \end{tabular}
  \label{tab:abl_continuity}
\end{table}
\begin{figure}[t]
  \centering
   \includegraphics[width=0.95\linewidth]{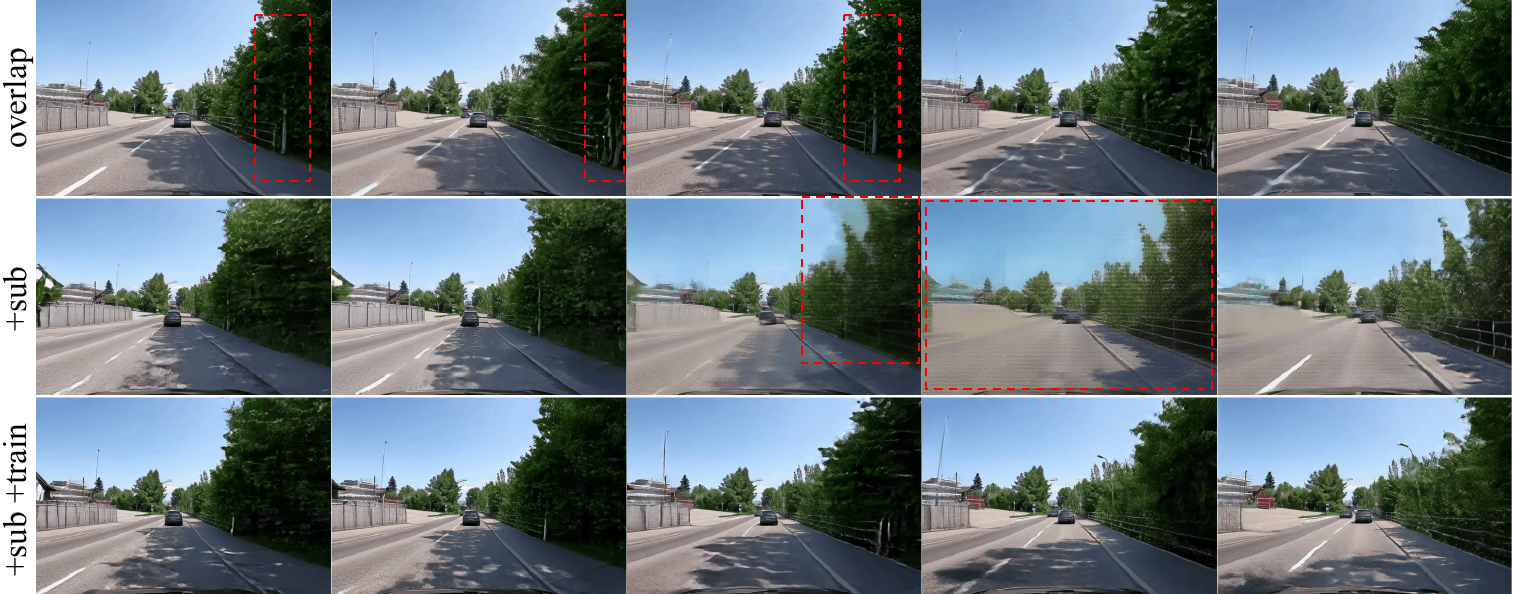}
    \caption{Ablation on Frame Continuity Design. The designs test continuity at segment boundaries: \textbf{overlap} (simple overlap); \textbf{+ sub} (overlap + latent substitution); and \textbf{+ sub + train} (our trained continuity mechanism).}
   \label{fig:Ablation_Interpolation_Continuity}
\end{figure}

\noindent\textbf{Seamless Boundary Consistency.}
We evaluate the segment consistency mechanism by providing Ground-Truth (GT) frames as boundary conditions and comparing generated frames proximal to the segment junctions against these targets. As shown in \cref{tab:abl_continuity} and \cref{fig:Ablation_Interpolation_Continuity}, simple segment overlap (\textit{Overlap}) fails to guarantee visual coherence; the final frame of a preceding segment often deviates from the required starting pose of the subsequent one, resulting in noticeable motion and temporal fractures. While naively replacing the first frame with the preceding segment's last latent (\textit{+Sub}) ensures junction continuity, it introduces pronounced artifacts. This stems from a distribution mismatch where the model, expecting noisy inputs, struggles to process a clean and deterministic conditioning frame without explicit consistency training. In contrast, our full strategy (\textit{+Sub+Train}) effectively bridges this gap, achieving superior smoothness and structural fidelity at segment junctions.

\begin{table}[t]
\centering
\caption{Ablation study on the effect of keyframes for 32-second videos on the OpenDV-YT dataset. The inclusion of keyframes significantly enhances long-term consistency and trajectory fidelity.}
\label{tab:keyframe_ablation_32s}
\small
\resizebox{\linewidth}{!}{
\begin{tabular}{lccccccc}
\toprule
Method & Overall FID/FVD$\downarrow$ & 0-8 s & 8-16 s & 16-24 s & 24-32 s & ATE $\downarrow$ & ARE $\downarrow$ \\ 
\midrule
w/o Keyframe & 25.2 / 376.7 & 21.1 / 247.1 & 31.5 / 407.5 & 38.2 / 546.5 & 45.2 / 665.3 & 0.387 & 12.184 \\ 
\textbf{Ours} & \textbf{19.2 / 203.7} & \textbf{19.6 / 191.4} & \textbf{25.1 / 255.8} & \textbf{27.3 / 276.0} & \textbf{28.6 / 313.8} & \textbf{0.237} & \textbf{7.669} \\ 
\bottomrule
\end{tabular}
}
\end{table}
\begin{figure}[!h]
  \centering
   \includegraphics[width=0.95\linewidth]{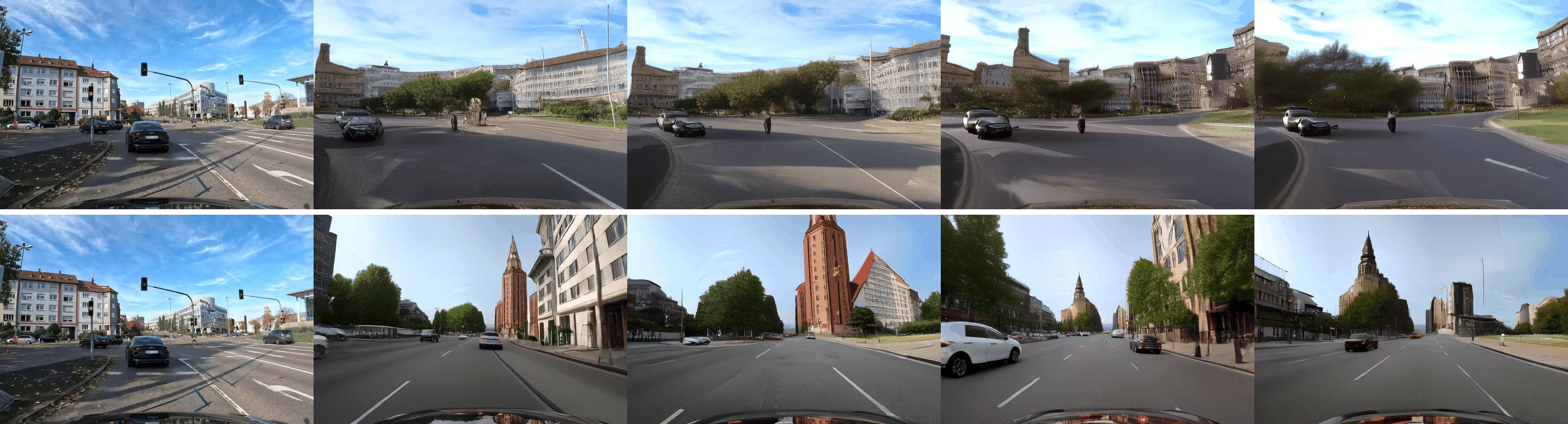}
    \caption{Qualitative ablation of keyframe guidance on 32-second videos. The top row shows the results without keyframe conditioning, and the bottom row shows our full model with keyframe anchors.}
   \label{fig:Ablation_keyframe_usage}
\end{figure}

\noindent\textbf{Necessity of Keyframe Anchoring.} 
We evaluate the necessity of structural anchors by removing keyframe conditioning during the interpolation stage. Quantitatively, \cref{tab:keyframe_ablation_32s} shows that without keyframe guidance, the model suffers from long-term drifting and poor camera accuracy, with FID/FVD increasing sharply as the sequence extends and ATE/ARE being significantly worse. This confirms that keyframes are essential for bounding accumulated errors over long durations. Qualitatively, \cref{fig:Ablation_keyframe_usage} illustrates that the unguided variant fails to maintain camera control during complex maneuvers, such as a left turn followed by a straight trajectory, resulting in noticeable scene distortion and path deviation. In contrast, our full model leverages keyframe anchors to ensure both visual stability and precise trajectory adherence across the entire 32-second horizon.
\section{Conclusion and Future Works}

In this paper, we presented \OurName{}, a divide-and-conquer framework that reformulates long-trajectory video generation through an autoregressive modeling paradigm. By decoupling the complex synthesis task into global structural planning via keyframe anchors and local refinement via segment-wise interpolation, our approach effectively bounds the cumulative drift inherent in traditional autoregressive methods. Extensive experiments on the large-scale OpenDV-YouTube and nuScenes datasets demonstrate that \OurName{} consistently achieves state-of-the-art performance in both visual fidelity and trajectory adherence. Notably, our framework enables the stable synthesis of high-quality long trajectories for extended horizons up to 32 seconds, significantly surpassing existing baselines.  

While our dual-generator design ensures stability, a promising direction is to investigate unified architectures that can jointly model the distribution of sparse keyframes and dense sequences within a single latent space without compromising generation quality.

\section*{Acknowledgements}
Research was sponsored by the Army Research Office and was accomplished under Cooperative Agreement Number W911NF-25-2-0040. The views and conclusions contained in this document are those of the authors and should not be interpreted as representing the official policies, either expressed or implied, of the Army Research Office or the U.S. Government. The U.S. Government is authorized to reproduce and distribute reprints for Government purposes notwithstanding any copyright notation herein.

This material is based upon work supported by the Intelligence Advanced Research Projects Activity under prime Contract No. 140D0423C0034.

\bibliographystyle{splncs04}
\bibliography{main}

\clearpage
\appendix
\noindent{\large \textbf{Appendix}}
\\

\setcounter{section}{0}
\setcounter{figure}{0}
\setcounter{table}{0}
\renewcommand{\thesection}{\Alph{section}}
\renewcommand{\thefigure}{\Alph{figure}}
\renewcommand{\thetable}{\Alph{table}}

\setcounter{page}{1}

\renewcommand\thesection{\Alph{section}}
\renewcommand\thesubsection{\thesection.\arabic{subsection}}
\renewcommand\thesubsubsection{\thesubsection.\arabic{subsubsection}}
\setcounter{section}{0}

\noindent\textbf{Overview.} In this supplementary material, we first provide a detailed proof of Proposition 1 from the main paper in Section~\ref{sec:supp:proof}. We then present additional implementation details in Section~\ref{sec:supp:implement_details}, followed by additional results and analysis in Section~\ref{sec:supp:results}. Finally, we discuss the limitations of our work in Section~\ref{sec:supp:limit}.

\section{Proof of Proposition 1}
\label{sec:supp:proof}

We evaluate the generation quality using the Mean Squared Error (MSE), defined as $\text{MSE}(e_t) = \mathbb{E}[\|e_t\|^2] = \|\mathbb{E}[e_t]\|^2 + \text{Tr}(\text{Cov}(e_t))$, where $\|\mathbb{E}[e_t]\|^2$ represents the squared systematic bias (drift) and $\text{Tr}(\text{Cov}(e_t))$ represents the error variance.

\subsection*{Part I: Statistical Divergence of Pure Autoregressive Generation}

In pure AR generation, $I_{t+1} = \mathcal{F}(I_t, \epsilon_t)$, where $I_t$ is the generated frame and $\epsilon_t$ is the sampling noise. We analyze the stability by considering the cumulative error $e_t = I_t - I_t^*$, where $I_t^*$ is the ground-truth frame. Using a first-order perturbation, we add and subtract $\mathcal{F}(I_t^*, \epsilon_t)$ to decompose the error:
\begin{align}
    e_{t+1} &= [\mathcal{F}(I_t, \epsilon_t) - \mathcal{F}(I_t^*, \epsilon_t)] + [\mathcal{F}(I_t^*, \epsilon_t) - I_{t+1}^*] \nonumber \\
    &\approx \nabla_{I_t^*} \mathcal{F} \cdot e_t + \eta_{t+1},
\end{align}
where $\eta_{t+1}$ is the single-step drift when conditioned on ground truth. Assuming $\mathcal{F}$ is $L$-Lipschitz with $L = \sup \|\nabla_{I_t^*} \mathcal{F}\|$, we obtain the recursive norm bound $\|e_{t+1}\| \le L \|e_t\| + \|\eta_{t+1}\|$. 

\begin{proof}[Derivation of AR Divergence]

\noindent\textbf{Divergence Upper Bound.} Unrolling the recursive inequality $\|e_t\| \le L \|e_{t-1}\| + \|\eta_t\|$ from $t=1$ to $N$, with $\|e_0\|=0$, we have:
\begin{align}
    \|e_1\| &\le \|\eta_1\| \nonumber \\
    \|e_2\| &\le L\|e_1\| + \|\eta_2\| \le L\|\eta_1\| + \|\eta_2\| \nonumber \\
    &\dots \nonumber \\
    \|e_N^{\text{base}}\| &\le \sum_{t=1}^N L^{N-t} \|\eta_t\|.
\end{align}
This formula reveals the catastrophic accumulation of error in pure AR models:
\begin{itemize}
    \item \textbf{Case $L=1$:} Assuming $\|\eta_j\| \le \eta$, the bound simplifies to $\|e_N\| \le \sum_{j=1}^N \eta = \mathcal{O}(N)$, representing a linear growth of error norm bound.
    \item \textbf{Case $L>1$:} The geometric series diverges as $\mathcal{O}(L^N)$, where local artifacts are exponentially amplified.
\end{itemize}

\noindent\textbf{Divergence Lower Bound.} While the Lipschitz upper bound describes the maximum potential error, the intrinsic divergence of pure AR generation is governed by the persistent nature of single-step errors $\eta_t$. In practice, the learned mapping $\mathcal{F}$ is subject to \textit{exposure bias}, where the distribution gap between training and inference leads to a non-vanishing systematic drift. We formally assume that the expected single-step error is bounded away from zero, i.e., $\|\mathbb{E}[\eta_t]\| \ge \mu > 0$, or potentially increases with $t$ as the state $I_t$ departs from the training distribution.

Furthermore, to prevent visual signal decay over time, the mapping $\mathcal{F}$ must be non-contractive, meaning it does not shrink the latent state space in any direction. Formally, we assume the minimum singular value (the smallest scaling factor) of the Jacobian $\mathbf{J}_t = \nabla_{I_t^*} \mathcal{F}$ satisfies $\sigma_{\min}(\mathbf{J}_t) \ge 1$ for all $t$. Under this non-contractive assumption, the expected cumulative error $e_N^{base}$ is unrolled through the product of time-varying Jacobians:
\begin{equation}
    \|\mathbb{E}[e_N^{\text{base}}]\| = \left\| \sum_{t=1}^N \left( \prod_{k=t}^{N-1} \mathbf{J}_k \right) \mathbb{E}[\eta_t] \right\| \ge \sum_{t=1}^N \left( \prod_{k=t}^{N-1} \sigma_{\min}(\mathbf{J}_k) \right) \mu \ge N\mu,
\end{equation}
where $\prod_{k=t}^{N-1} \mathbf{J}_k$ represents the error transition from step $t$ to $N$. Since the growth is at least linear in $N$, the squared bias satisfies $\|\mathbb{E}[e_N^{\text{base}}]\|^2 = \Omega(N^2)$.

\noindent\textbf{Variance Accumulation.} Even in an idealized scenario where the generator is unbiased ($\mathbb{E}[\eta_t] = 0$), the AR process remains unstable due to stochastic noise accumulation. Modeling $\{\eta_t\}$ as i.i.d. random variables with variance $\text{Tr}(\text{Cov}(\eta_t)) = \sigma^2$, the variance of the cumulative error is:
\begin{equation}
    \text{Tr}(\text{Cov}(e_N^{\text{base}})) = \sum_{t=1}^N \text{Tr}(\text{Cov}(\eta_t)) = N\sigma^2.
\end{equation}
Under the decomposition $\text{MSE} = \|\text{Bias}\|^2 + \text{Variance}$, the Mean Squared Error is lower-bounded by $\Omega(N)$. Physically, this indicates that pure AR models lack a corrective mechanism; without global anchors, both systematic drift and stochastic noise force the trajectory to diverge irreversibly from the ground truth.
\end{proof}

\subsection*{Part II: Error Bounding via Keyframe-Assisted Interpolation}

In contrast to the recursive accumulation in pure AR models, our strategy partitions the sequence into segments $S_i$ using sparse keyframes $\{K_i\}$ at interval $T$. For any frame $t = iT + \tau$ where $\tau \in (0, T)$ is the temporal offset within segment $S_i$, the generation process is defined by a conditional interpolation mapping $\mathcal{G}$:
\begin{equation}
    I_{iT+\tau} = \mathcal{G}(K_i, K_{i+1}, S_{i-1}, \tau, \epsilon_\tau),
\end{equation}
where $K_i$ and $K_{i+1}$ are the bidirectional global anchors, $S_{i-1}$ provides the preceding temporal context (momentum), and $\epsilon_\tau$ represents the local sampling noise.

\subsubsection*{Step 1: Unified Decomposition Framework}

For any frame $t = iT + \tau$ in segment $S_i$ ($\tau \in (0, T)$), we model the generation process as the superposition of two components: a \textbf{Brownian Bridge} $B_\tau$ pinned at the keyframes $\{K_i, K_{i+1}\}$ to ensure global structural alignment, and an \textbf{autoregressive residual} $\Phi_{i}(\tau)$ to maintain temporal continuity from the preceding segment $S_{i-1}$. Specifically, $I_{iT+\tau} = B_\tau + \Phi_{i}(\tau)$, where the Brownian Bridge $B_\tau$ is a stochastic process characterized by:
\begin{align}
    \mathbb{E}[B_\tau] &= \left(1 - \frac{\tau}{T}\right) K_i + \frac{\tau}{T} K_{i+1}, \\
    \text{Var}(B_\tau) &= \frac{\tau(T-\tau)}{T} \sigma_{int}^2,
\end{align}
where $\sigma_{int}^2$ denotes the interpolation noise intensity. Under this framework, the generated state $I_{iT+\tau}$ is the sum of the bridge mean, the cross-segment consistency shift $\Phi_{i}(\tau)$ (representing momentum), and stochastic bridge noise $\epsilon_\tau \sim \mathcal{N}(0, \text{Var}(B_\tau))$. 

Assuming the ground-truth trajectory $I_t^*$ follows a similar Brownian Bridge dynamic anchored at ground-truth keyframes $\{K_i^*\}$, the interpolation error $e_{iT+\tau}^{\text{ours}} = I_t - I_t^*$ is derived by subtracting the respective components:
\begin{align}
    \label{eq:bridge_sum}
    e_{iT+\tau}^{\text{ours}} = \underbrace{\left(1 - \frac{\tau}{T}\right) e(K_i) + \frac{\tau}{T} e(K_{i+1})}_{E_{anch} \text{ (Boundary Interpolation)}} + \underbrace{\delta_{i}(\tau)}_{E_{leak} \text{ (Momentum Leakage)}} + \underbrace{w_\tau}_{E_{noise} \text{ (Local Noise)}},
\end{align}

where $E_{anch}$ results from the linear interpolation of keyframe drifts $e(K_i) = K_i - K_i^*$, $E_{leak} = \Phi - \Phi^*$ is the implicit autoregressive error (momentum leakage), and $E_{noise} = \epsilon - \epsilon^*$ is the local stochastic hallucination. To prove global stability, we must show that $E_{leak}$ remains bounded independently of the sequence length $N$.

\subsubsection*{Step 2: Damping of Momentum Leakage}

To maintain temporal continuity while adhering to fixed keyframes, the video model implicitly minimizes the acceleration along the error trajectory $\delta_{i}(\tau)$, where $\tau \in [0, T]$ is the frame offset. This corresponds to minimizing the second-order energy functional:
\begin{equation}
    L(\delta_{i}) = \int_{0}^{T} (\delta_{i}''(\tau))^2 d\tau.
\end{equation}
Under these dynamics, the momentum error entering from the preceding segment $S_{i-1}$ is defined as $\Delta v_{i-1} = \delta_{i-1}'(T)$. This term acts as the initial velocity constraint $\delta_{i}'(0) = \Delta v_{i-1}$, which is then structurally damped across the current segment.

\begin{lemma}[Damping Coefficient]
\label{lemma:damping}
The transmitted velocity error $\Delta v_i$ at the boundary satisfies the recurrence $\Delta v_i = \gamma \Delta v_{i-1}$, where the damping factor is:
\begin{equation}
    \gamma = -{1\over2}.
\end{equation}
\end{lemma}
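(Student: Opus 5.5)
The plan is to read the lemma as the solution of a one-segment boundary-value problem for the leakage error $\delta_i$ on $[0,T]$, compute the minimiser of the energy $L(\delta_i)=\int_0^T(\delta_i''(\tau))^2\,d\tau$ in closed form, and then read off its exit slope $\Delta v_i=\delta_i'(T)$.

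\textbf{Boundary conditions.} The keyframe contributions have already been separated into $E_{anch}$ in \eqref{eq:bridge_sum}. So the momentum-leakage term should vanish at both anchors, giving the essential conditions $\delta_i(0)=0$ and $\delta_i(T)=0$. The incoming momentum supplies the third essential condition, $\delta_i'(0)=\Delta v_{i-1}$. The exit slope $\delta_i'(T)$ is \emph{not} prescribed, since it is exactly the quantity transmitted to the next segment. Hence the variational problem must also produce a natural boundary condition at $\tau=T$.

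\textbf{Variational computation.} I take a variation $\delta_i+\epsilon h$ with admissible $h$, meaning $h(0)=h(T)=h'(0)=0$ and $h'(T)$ free. Setting the first variation to zero and integrating by parts twice gives
\begin{equation*}
\int_0^T \delta_i'' h''\,d\tau \;=\; \big[\delta_i'' h' - \delta_i''' h\big]_0^T + \int_0^T \delta_i'''' h\,d\tau \;=\; 0 .
\end{equation*}
Since $h$ is arbitrary in the interior, this yields the Euler--Lagrange equation $\delta_i''''=0$. The surviving boundary term is $\delta_i''(T)h'(T)$, and because $h'(T)$ is free it gives the natural condition $\delta_i''(T)=0$.

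The Euler--Lagrange equation forces $\delta_i$ to be a cubic. Using $\delta_i(0)=0$, I write it as
\begin{equation*}
\delta_i(\tau)=a\tau+b\tau^2+c\tau^3, \qquad a=\Delta v_{i-1}.
\end{equation*}

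\textbf{Solving for the coefficients.} The natural condition $\delta_i''(T)=0$ gives $2b+6cT=0$, so $b=-3cT$. Substituting into $\delta_i(T)=0$ gives $aT-2cT^3=0$. Therefore
\begin{equation*}
c=\frac{a}{2T^2}, \qquad b=-\frac{3a}{2T}.
\end{equation*}
Evaluating the exit slope,
\begin{equation*}
\delta_i'(T)=a+2bT+3cT^2=a-3a+\tfrac{3}{2}a=-\tfrac{1}{2}a .
\end{equation*}
This is $\Delta v_i=-\tfrac12\Delta v_{i-1}$. Notably $T$ cancels, which explains why $\gamma$ is independent of the stride. Finally, strict convexity of the quadratic energy on the affine admissible set ensures this critical point is the unique minimiser, so the recurrence is well defined.

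\textbf{Main obstacle.} The delicate step is justifying the boundary conditions rather than the algebra. Specifically, one must argue that the leakage vanishes at both keyframes and that the exit slope is left free, rather than clamped. If instead $\delta_i'(T)$ were clamped or $\delta_i(T)$ were left free, a different factor would result. I would argue this directly from the decomposition \eqref{eq:bridge_sum}: the anchors absorb all positional error, while the interpolation generator is never told the outgoing velocity. That the free exit slope is the natural-boundary case of the variational problem is what gives $\gamma=-1/2$.
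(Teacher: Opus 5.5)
Your proposal is correct and follows essentially the same route as the paper: the Euler--Lagrange equation $\delta_i^{(4)}=0$ forces a cubic, the same four conditions ($\delta_i(0)=\delta_i(T)=0$, $\delta_i'(0)=\Delta v_{i-1}$, $\delta_i''(T)=0$) fix its coefficients, and the exit slope gives $\Delta v_i=-\frac{1}{2}\Delta v_{i-1}$. Your two extras are small rigorous additions the paper leaves unstated: deriving the natural condition at $\tau=T$ by integration by parts with $h'(T)$ free (the paper just asserts it), and noting strict convexity of the energy to get uniqueness of the minimiser.
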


\begin{proof}
The Euler-Lagrange equation for a functional involving second-order derivatives, $L = \int f(\tau, \delta_i, \delta_i', \delta_i'') d\tau$ with $f = (\delta_i'')^2$, is given by:
\begin{equation}
    \frac{\partial f}{\partial \delta_i} - \frac{d}{d\tau} \left( \frac{\partial f}{\partial \delta_i'} \right) + \frac{d^2}{d\tau^2} \left( \frac{\partial f}{\partial \delta_i''} \right) = 0.
\end{equation}
Since $f$ only depends on the second derivative $\delta_i''$, we have $\frac{\partial f}{\partial \delta_i} = 0$ and $\frac{\partial f}{\partial \delta_i'} = 0$. Substituting $\frac{\partial f}{\partial \delta_i''} = 2\delta_i''(\tau)$ into the equation yields:
\begin{equation}
    \frac{d^2}{d\tau^2} (2\delta_i''(\tau)) = 0 \implies 2\delta_i^{(4)}(\tau) = 0
\end{equation}
Thus, the governing equation is $\delta_i^{(4)}(\tau) = 0$. The general solution to this fourth-order linear differential equation is a cubic polynomial:
\begin{equation}
    \delta_i(\tau) = a\tau^3 + b\tau^2 + c\tau + d.
\end{equation}
To isolate the effect of velocity leakage, we apply the positional anchoring and smoothness constraints:
\begin{align}
    \delta_{i}(0) = 0, \quad \delta_{i}(T) = 0 & \quad \text{(Rigid Positional Anchors)}, \\
    \delta_{i}'(0) = \Delta v_{i-1} & \quad \text{(Velocity Continuity)}, \\
    \delta_{i}''(T) = 0 & \quad \text{(Natural Boundary Condition).}
\end{align}
Substituting these conditions into the cubic form yields the following algebraic system:
\begin{align}
    d = 0, \quad c &= \Delta v_{i-1}, \\
    6aT + 2b &= 0 \implies b = -3aT, \\
    aT^3 + (-3aT)T^2 + \Delta v_{i-1}T &= 0 \implies a = \frac{\Delta v_{i-1}}{2T^2}.
\end{align}
The velocity error transmitted to the start of the next segment $S_{i+1}$ is the derivative at the current segment's end $\tau=T$:
\begin{align}
    \Delta v_i = \delta_{i}'(T) &= 3aT^2 + 2bT + c \nonumber \\
    &= 3\left(\frac{\Delta v_{i-1}}{2T^2}\right)T^2 + 2\left(-\frac{3\Delta v_{i-1}}{2T}\right)T + \Delta v_{i-1} \nonumber \\
    &= \frac{3}{2}\Delta v_{i-1} - 3\Delta v_{i-1} + \Delta v_{i-1} \nonumber \\
    &= -\frac{1}{2} \Delta v_{i-1}.
\end{align}
\end{proof}
The geometric progression $|\gamma| = 0.5 < 1$ confirms that the momentum error is exponentially attenuated across segments, ensuring that the system remains globally stable and avoids the cumulative divergence typical of pure AR models. 

Furthermore, we analyze the maximum magnitude of the leakage term $\|\delta_{i}(\tau)\|$ within a single segment to bound the local drift. By identifying the extremum where $\delta_i'(\tau) = 0$ (occurring at $\tau^* = T(1 - \frac{\sqrt{3}}{3})$), the intra-segment peak error is given by:
\begin{equation}
    \max_{\tau \in [0, T]} \|\delta_{i}(\tau)\| = \frac{\sqrt{3}}{9} T \|\Delta v_{i-1}\| \approx 0.192 T \|\Delta v_{i-1}\|.
\end{equation}
This result shows that the momentum leakage is locally constrained by the keyframe interval $T$ and remains strictly proportional to the incoming velocity error, providing a theoretical guarantee for the unified global bound.

\subsubsection*{Step 3: Synthesis of the Unified Global Bound}

Substituting the results from the previous steps into the decomposition framework of \cref{eq:bridge_sum}, we synthesize the final upper bound for the generation error. By the triangle inequality, the total error norm is bounded by the sum of three distinct components:
\begin{equation}
    \|e_{iT+\tau}^{\text{ours}}\| \le \underbrace{\|E_{anch}\|}_{\text{Keyframe Drift}} + \underbrace{\|E_{leak}\|}_{\text{Momentum Leakage}} + \underbrace{\|E_{noise}\|}_{\text{Interpolation Variance}}.
\end{equation}

We analyze each component to establish the global stability:

\begin{itemize}
    \item \textbf{Keyframe Drift ($\|E_{anch}\|$):} This term depends on the strategy used to generate $\{K_j\}$. 
    \begin{enumerate}
        \item \textit{Global Generation:} If keyframes are generated globally (e.g., via a sparse pre-selected set), $\|e(K_j)\| \le C_{kf}$, yielding $\|E_{anch}\| = \mathcal{O}(1)$.
        \item \textit{Downsampled AR:} If keyframes are generated autoregressively with step $T$, the error accumulates over $N/T$ steps. Following Part I, $\|e(K_j)\| = \mathcal{O}(N/T)$, which represents a $T$-fold suppression of the pure AR drift $\mathcal{O}(N)$.
    \end{enumerate}

    \item \textbf{Momentum Leakage ($\|E_{leak}\|$):} From \cref{lemma:damping}, the velocity error satisfies $\Delta v_i = \gamma \Delta v_{i-1}$ with $|\gamma| = 0.5$. The cumulative velocity error is bounded by the geometric series $\sum_{k=0}^i \gamma^k \|\Delta v_0\| \le 2\|\Delta v_0\|$. Combined with the intra-segment peak derived in Step 2:
    \begin{equation}
        \sup_{i, \tau} \|\delta_i(\tau)\| \le 0.192 T \cdot (2\|\Delta v_0\|) = 0.384 T \|\Delta v_0\|.
    \end{equation}
    Crucially, this term is only dependent of the keyframe interval $(T)$ and independent of the total sequence length $N$, confirming that momentum error does not diverge.

    \item \textbf{Interpolation Variance ($\|E_{noise}\|$):} The local stochastic error is governed by the Brownian Bridge variance $\text{Var}(B_\tau) = \frac{\tau(T-\tau)}{T} \sigma_{int}^2$. To find the upper bound, we maximize this quadratic form with respect to $\tau$:
    \begin{equation}
        \frac{d}{d\tau} \left( \frac{\tau T - \tau^2}{T} \right) = 0 \implies \tau = \frac{T}{2}; \quad \max_\tau \text{Var}(B_\tau) = \frac{T}{4} \sigma_{int}^2.
    \end{equation}
    Thus, $\|E_{noise}\| \le \frac{\sqrt{T}}{2} \sigma_{int}$, indicating that local hallucination is strictly constrained by the segment length.
\end{itemize}

\noindent \textbf{Conclusion:} 
Combining these results, the unified global bound is:
\begin{equation}
    \label{eq:unified_bound_final}
    \|e_t^{\text{ours}}\| \le \max_j \|e(K_j)\| + 0.384 T \|\Delta v_0\| + \frac{\sqrt{T}}{2} \sigma_{int}.
\end{equation}
Unlike the $\Omega(N)$ divergence of pure AR models, our method ensures that the error is dominated by the quality of sparse keyframes and the local segment length $T$. Even in the worst-case AR keyframe scenario, the MSE is suppressed by a factor of $T^2$, effectively neutralizing catastrophic temporal decay and establishing long-term generation stability.

\noindent \textbf{Conclusion on Keyframe Scenarios:} 
If keyframes are generated globally, $\|e_t^{\text{ours}}\| = \mathcal{O}(1)$. If keyframes are generated autoregressively with step $T$, $\|e_t^{\text{ours}}\| = \mathcal{O}(N/T)$, achieving a $T$-fold (and $T^2$ in MSE) suppression of degradation compared to pure AR. The theoretical results establish the advantage of our method over pure AR generation, and the subsequent experimental results further confirm this advantage empirically.

\section{Implementation Details}
\label{sec:supp:implement_details}
\subsection{Training Details}

\noindent\textbf{Interpolation Stage Training.} 
For the training of the interpolation generator $G_I$, we employ a stochastic keyframe conditioning strategy to enhance the model's robustness to diverse temporal contexts. Specifically, for each training segment, we randomly sample $1$ to $10$ conditioning keyframes from the keyframe set $\mathcal{K}$. These keyframes are restricted to a temporal window within $\pm 10$ frames of the target segment's range to simulate local structural anchors. Furthermore, to facilitate the \textit{Motion-Inductive Noisy Conditioning} (Sec. 4.3), the guidance coefficient $\alpha_c$ is randomly sampled from the range $[0.1, 0.5]$ during training, ensuring that the model learns to prioritize underlying motion dynamics over simple pixel-level replication.

\noindent\textbf{Baseline Implementation.} 
To ensure a fair and comprehensive comparison, we evaluate our model against several state-of-the-art baselines. For consistency, all compared models are provided with camera trajectories as conditional inputs. We independently implemented the training and inference pipelines for SEVA~\cite{seva} and Diffusion Forcing~\cite{df1} following the architectures described in their respective papers. For other baseline methods, we utilized their official open-source repositories and strictly adhered to the recommended training configurations and hyperparameters. All models, including our re-implementations, were trained until convergence to ensure they reached their optimal performance levels for the benchmarking tasks.

\noindent\textbf{Evaluation Metrics and Resolutions.} 
We report the Fréchet Inception Distance (FID) and Fréchet Video Distance (FVD) to assess visual quality and temporal consistency. For FID calculations, while SEVA uses its default $299 \times 299$ resolution (aligned with the standard Inception-v3 input), all other methods, including our own, are evaluated at a resolution of $256 \times 448$ to match the native aspect ratio of the datasets. For the FVD metric, all compared methods are evaluated at a unified resolution of $224 \times 224$ using the standard I3D backbone to ensure a consistent and unbiased comparison of temporal dynamics.

\section{Additional Results}
\label{sec:supp:results}
This section provides supplementary qualitative evidence to further substantiate our experimental findings and theoretical analysis.

\subsection{Extended Comparisons in Long-Trajectory Generation}
We present additional visual comparisons to further demonstrate the superiority of the DCARL framework in long-term video synthesis. As illustrated in~\cref{fig:sup_main_qual_res} and~\cref{fig:nusc_sup_res_qual}, our method maintains significantly higher temporal consistency and trajectory fidelity over extended horizons compared to existing baselines. While competing methods often suffer from visual drifting or structural collapse as the sequence progresses, DCARL leverages sparse keyframe anchors to ensure stable and realistic scene evolution throughout the entire trajectory.

\subsection{Comparisons in Small-Scene Generation}
To further evaluate the generalization capability of DCARL, we conduct experiments on the DL3DV-10K dataset~\cite{dl3dv}, which focuses on detailed small-scene reconstruction and generation. The quantitative results are summarized in~\cref{tab:dl3dv_res}.

As illustrated in~\cref{fig:dl3dv_qual}, our method significantly outperforms autoregressive baselines in maintaining long-term visual stability and trajectory adherence. We observe that while existing AR methods often suffer from cumulative exposure bias—leading to gradual semantic drift or visual degradation over extended sequences—DCARL effectively eliminates such artifacts, preserving consistent scene appearance throughout the rollout. Furthermore, our model exhibits superior alignment with the input camera trajectories. While baseline methods frequently deviate from the intended path or exhibit trajectory drift during complex maneuvers, DCARL's keyframe-anchored strategy ensures that the generated views remain strictly faithful to the provided motion constraints, ensuring high-fidelity controllable generation.

\begin{table*}[t]
\centering
\caption{Comparison of our method with state-of-the-art approaches for 32-second video generation on the DL3DV-10K dataset.}
\label{tab:dl3dv_res}
\small
\resizebox{\linewidth}{!}{
\begin{tabular}{lccccccc}
\toprule
Method & Overall & 0-8 s & 8-16 s & 16-24 s & 24-32 s & ATE $\downarrow$ & ARE $\downarrow$ \\ 
\midrule
DiffF & 36.5 / 450.1 & 32.3 / 397.0 & 45.9 / 576.4 & 49.8 / 630.1 & 54.6 / 687.9 & 0.517 & 41.467 \\ 
SelfF & 50.4 / 1062.2 & 46.3 / 571.5 & 56.1 / 877.7 & 78.2 / 1569.3 & 105.2 / 2374.8 & 0.699 & 75.676 \\ 
DeepF & 58.0 / 1279.5 & 37.4 / 479.7 & 73.0 / 1454.6 & 83.4 / 1905.0 & 85.5 / 1988.0 & 0.664 & 66.689 \\ 
SEVA & 24.8 / 275.0 & 35.9 / 290.6 & 40.8 / 390.0 & 39.0 / 371.8 & 38.0 / 397.0 & 0.670 & 68.129 \\ 
\textbf{Ours} & \textbf{22.8 / 220.1} & \textbf{29.1 / 294.6} & \textbf{33.3 / 344.0} & \textbf{30.1 / 324.1} & \textbf{33.3 / 320.1} & \textbf{0.340} & \textbf{19.646} \\ 
\bottomrule
\end{tabular}
}
\end{table*}

\subsection{Failure Cases}
Despite the robust performance of DCARL, we identify certain scenarios where the model still faces challenges, as illustrated in Fig.~\ref{fig:failure_cases}. These failure cases provide insights into current limitations and future research directions.

\noindent\textbf{Long-distance Perception Limits.} The first and second rows of Fig.~\ref{fig:failure_cases} demonstrate that there is still room for improvement in long-range spatial perception. In the first row, the model fails to correctly anchor the vegetation, which should ideally remain within the center of the roundabout. In the second row, the generated trajectory incorrectly passes through a pedestrian island, indicating a lack of fine-grained understanding of complex road boundaries over long distances.

\noindent\textbf{Corner Cases and Data Distribution.} The third row depicts a challenging corner case where the sequence originates from beneath an overpass. Upon exiting the overpass, the model produces unrealistic and unnatural visual artifacts. Such failures suggest that the model remains sensitive to rare structural transitions and lighting changes. 

These observations highlight the necessity of designing more sophisticated long-distance perception mechanisms and further enriching the training set with a more balanced and diverse distribution of corner cases.

\section{Discussion}
\label{sec:supp:limit}
While DCARL effectively addresses the cumulative divergence in long-term video synthesis, we acknowledge several limitations that provide avenues for future research.

\noindent\textbf{Real-time Generation.} Although DCARL does not introduce significantly more inference latency compared to standard bidirectional attention-based models, it is not yet optimized for real-time interactive applications. However, our framework offers a promising blueprint for real-time streaming synthesis. Specifically, future iterations could employ a \textit{dual-stream} architecture: a high-speed causal model could handle low-latency frame generation, while a secondary sparse model generates and maintains a global scene representation—either as a collection of memory frames or 3D primitives—to provide structural anchors. Our work demonstrates that this divide-and-conquer approach is instrumental in stabilizing such real-time autoregressive processes.

\noindent\textbf{Keyframe Generation Quality.} Currently, the generation quality of sparse keyframes is slightly constrained by the inherent difficulty of learning long-range data distributions compared to short-window transitions. This challenge manifests in maintaining high-frequency visual details and precise camera trajectory adherence over ultra-long horizons. Under limited computational and data resources, we believe the dual-generator design remains optimal, and the keyframe generator could be further enhanced by distilling knowledge from 3D foundation models. While a unified generator representation might potentially yield higher quality, it would require significantly larger compute budgets and more diverse data representations. At present, DCARL ensures robust 32-second video generation with high-fidelity camera control and visual stability.

\begin{figure}[t] 
    \centering 
    \includegraphics[width=0.95\linewidth]{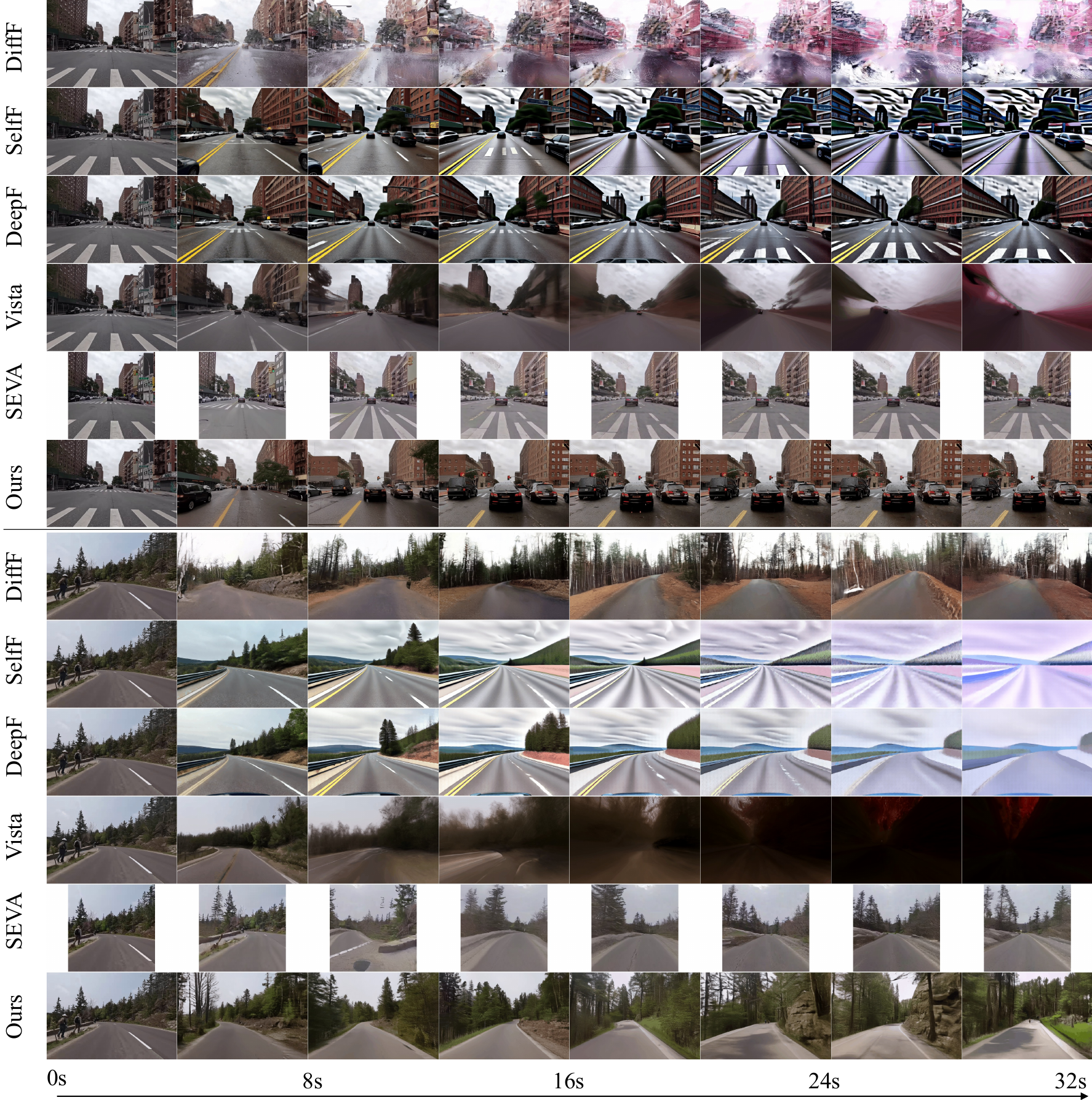} 
    \caption{Visual comparison of 32-second video generation on the OpenDV-YouTube~\cite{opendv} dataset.} 
    \label{fig:sup_main_qual_res} 
\end{figure}

\begin{figure}[t] 
    \centering 
    \includegraphics[width=0.95\linewidth]{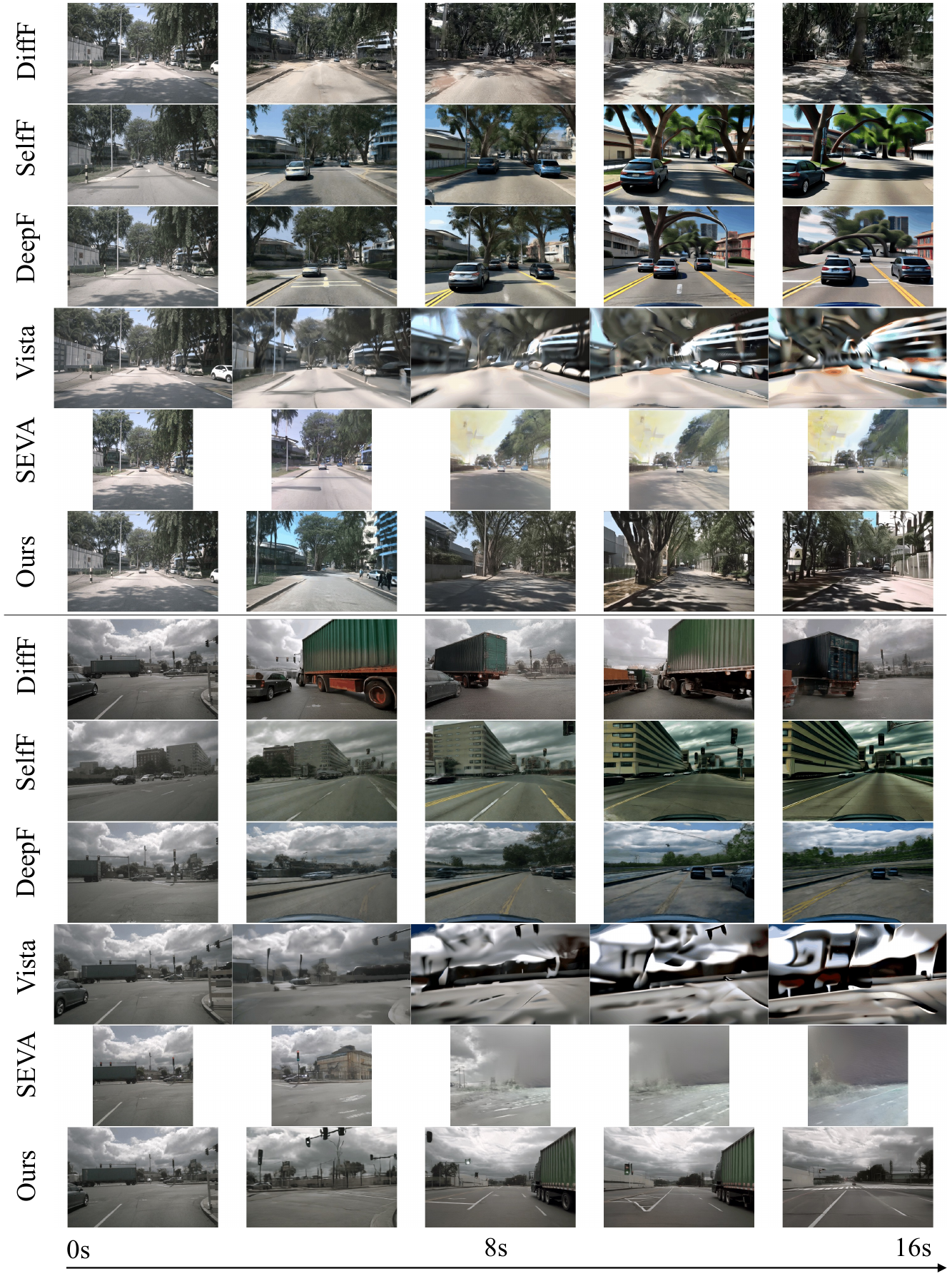} 
    \caption{Visual comparison of 16-second video generation on the NuScenes~\cite{nuscenes2019} dataset.} 
    \label{fig:nusc_sup_res_qual} 
\end{figure}

\begin{figure}[t]
    \centering
    \includegraphics[width=0.98\linewidth]{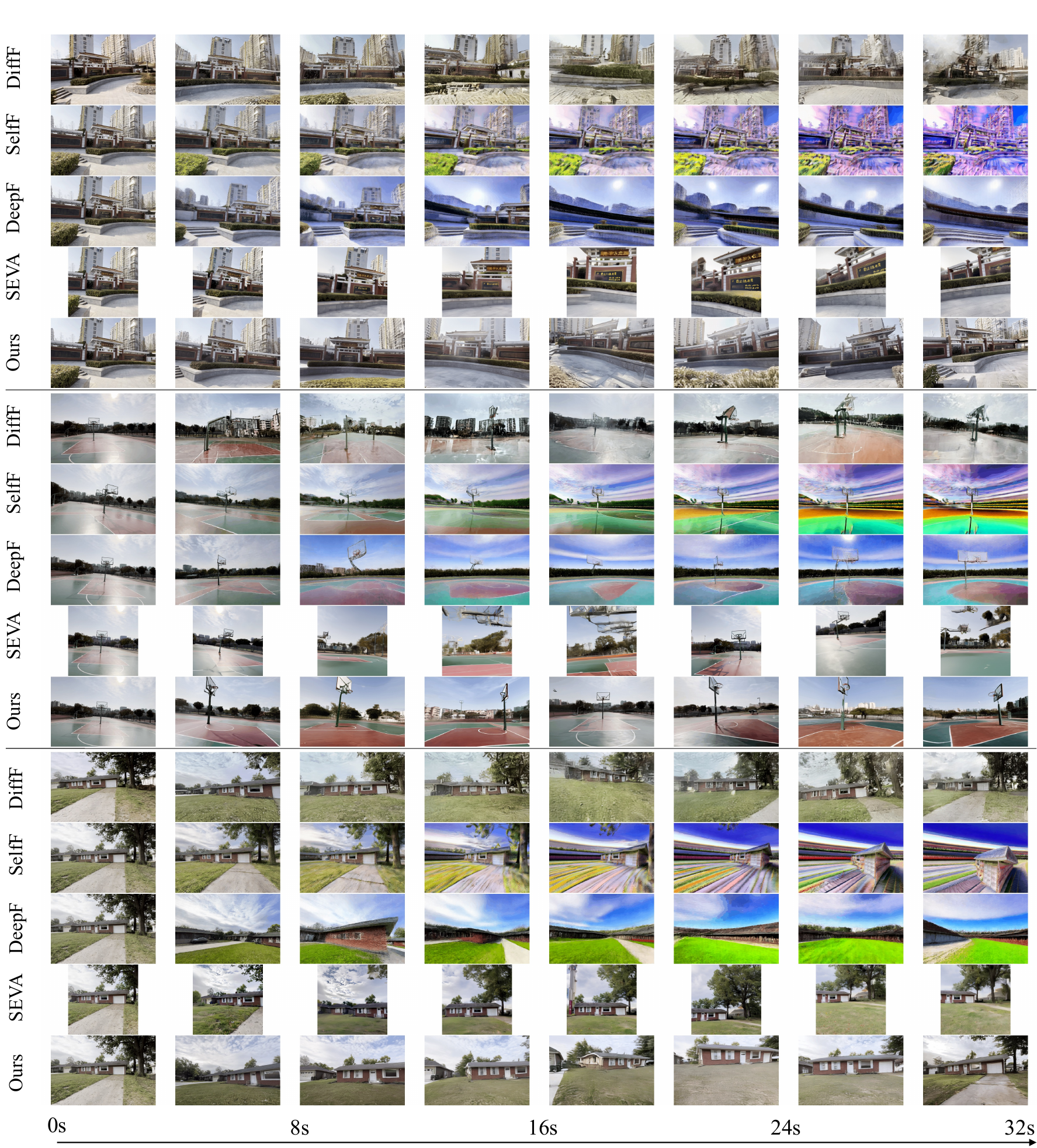}
    \caption{Qualitative comparison of 32-second video generation on the DL3DV dataset. Compared to other methods, DCARL preserves better stability and high-frequency details in small-scene environments.}
    \label{fig:dl3dv_qual}
\end{figure}

\begin{figure}[t]
    \centering
    \includegraphics[width=0.95\linewidth]{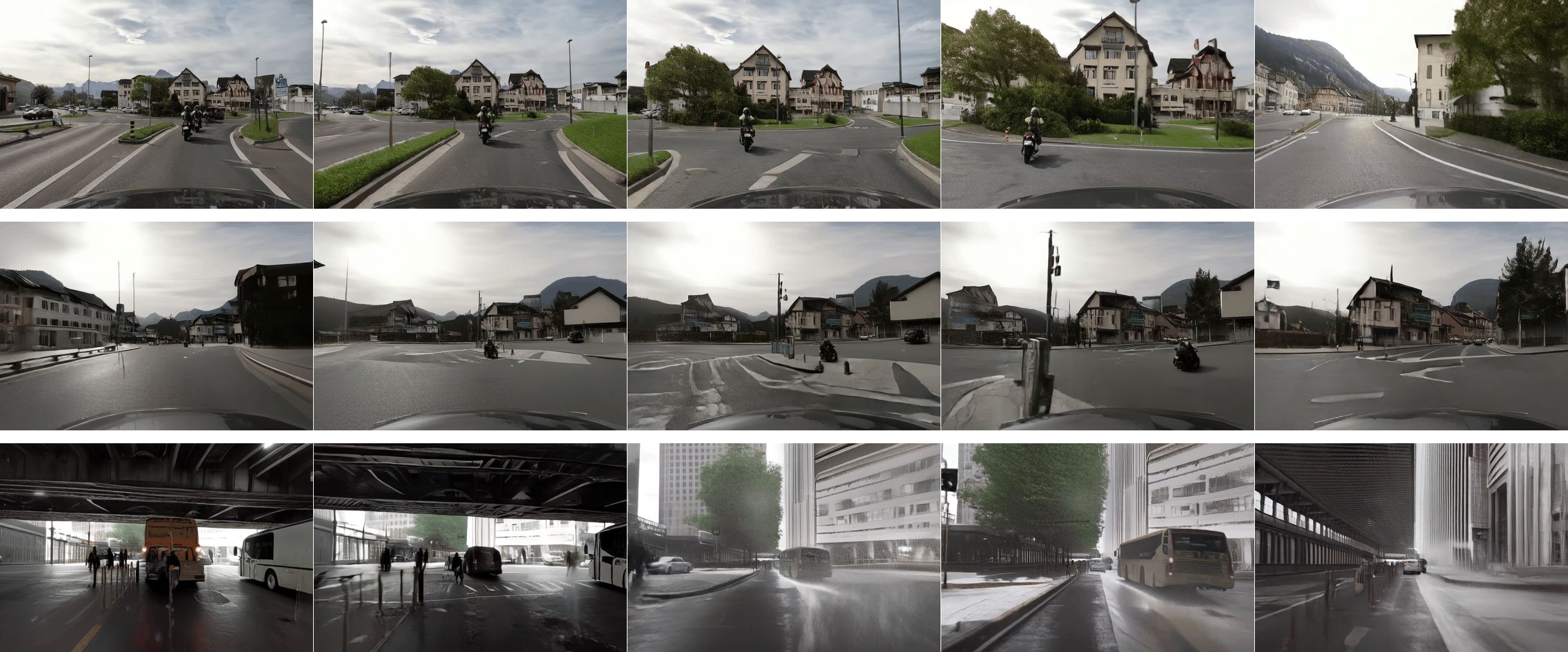}
    \caption{Visual analysis of failure cases. Rows 1-2: Challenges in long-distance perception, such as maintaining roundabout structures and respecting pedestrian islands. Row 3: A corner case involving complex structural transitions (exiting an overpass) that leads to unnatural artifacts.}
    \label{fig:failure_cases}
\end{figure}

\clearpage

\end{document}